\documentclass{article}

\PassOptionsToPackage{numbers, compress}{natbib}
\usepackage[final]{neurips_2024}

\usepackage[utf8]{inputenc} % allow utf-8 input
\usepackage[T1]{fontenc}    % use 8-bit T1 fonts
\usepackage{hyperref}       % hyperlinks
\usepackage{url}            % simple URL typesetting
\usepackage{booktabs}       % professional-quality tables
\usepackage{amsfonts}       % blackboard math symbols
\usepackage{nicefrac}       % compact symbols for 1/2, etc.
\usepackage{microtype}      % microtypography
\usepackage{xcolor}         % colors
\usepackage{mathtools}
\usepackage{makecell}
\usepackage{graphicx}
\usepackage{svg}
\usepackage{wrapfig}

\usepackage{tikz}
\usetikzlibrary{arrows.meta, positioning, shadows.blur}

\DeclarePairedDelimiter\ceil{\lceil}{\rceil}

\usepackage{amsthm}
\usepackage{amssymb}

\title{Prompting Complexity\\Shortest Prompts for Texts and Behaviors in LLMs}

\author{%
  Adrian Cosma \\
  Dalle Molle Institute for Artificial Intelligence (IDSIA) \\
  \texttt{adrian.cosma@idsia.ch} \\
}

\newtheorem{theorem}{Theorem}
\newtheorem{definition}{Definition}
\newtheorem{lemma}{Lemma}
\newtheorem{proposition}{Proposition}

\begin{document}

\maketitle

\begin{abstract}
In this paper, we define the quantity of \textit{prompting complexity}: for a fixed instruction-tuned language
model, \textit{what is the shortest plausible prompt that makes deterministic decoding
produce a target text}? It is an LM-relative analogue of resource-bounded Kolmogorov
complexity: the prompt is a program, the model interface is the interpreter,
and information omitted from the prompt is supplied by the model's weights,
training distribution, tokenizer, template, and decoding rule. Unlike classical
Kolmogorov complexity, this measure is intentionally non-universal. In the
finite-context setting it is computable by enumeration, but there is no
model-independent invariance theorem; the same text may be cheap for one model
and inaccessible or expensive for another. To keep the search space aligned with
prompt engineering, we restrict programs to plausible human-readable texts
rather than arbitrary token strings. We extend the exact definition to
\textit{soft prompting complexity} for approximate outputs, yielding a lossy
notion of model-relative text compression and a formal target for prompt
optimization. We also define \textit{prompting distance} by comparing shortest
generating prompts, and \textit{behavioral prompting complexity} for reaching any output
satisfying a specification. Based on these formulations, we define a research agenda for empirically studying which texts and behaviors are accessible from short plausible prompts under a fixed LM interface.
\end{abstract}

\section{Introduction}
\label{sec:intro}
Language models (\emph{LM}s) have made prompt engineering a routine interface for
computation in natural language. A user does not directly program the model's
weights, tokenizer, decoding rule, or training distribution; instead, they write
a short text and hope that, under the fixed model, it causes the desired
completion. This exposes a more basic theoretical question: \textit{given a target text 
what is the shortest plausible prompt that causes a fixed language model to 
generate it?} Furthermore, there are cases in which the target text is not relevant to the user,
but the model's behavior is: \textit{given a target behavior class, what is the shortest plausible prompt that causes a fixed language model to exhibit it?} 
We formalize these questions as prompting complexity and behavioral prompting complexity, respectively.

Intuitively, prompting complexity measures how much information a user must supply to a
model in order to elicit a desired output. This is a common situation in practice, as exemplified by the following scenario.

\begin{center}
\begin{tikzpicture}
\node[
    draw=blue!45!black,
    fill=blue!4,
    line width=0.8pt,
    rounded corners=4pt,
    inner sep=8pt,
    text width=\dimexpr0.96\linewidth-16pt\relax,
    align=left
] {
\textbf{Intuition.}
Alice and Bob are two people who both rely on the same LM while emailing. Alice writes a
few rough notes, asks the model to expand them into a polished message, and
sends the long email. Bob then emails back by writing his own rough notes and then asking the same model to expand them. The long messages
are mostly an interface convention: relative to the shared model, the useful
information is defined by the short note together with the model's generative capabilities.
\textit{Prompting complexity} asks how short such a note can be for a target text.
};
\end{tikzpicture}
\end{center}

In this scenario, the model's learned expansion rule is a form of compression: 
it allows a user to supply a short prompt and have the model fill in the rest.

\begin{figure*}[hbt!]
    \centering
    \includegraphics[width=\textwidth]{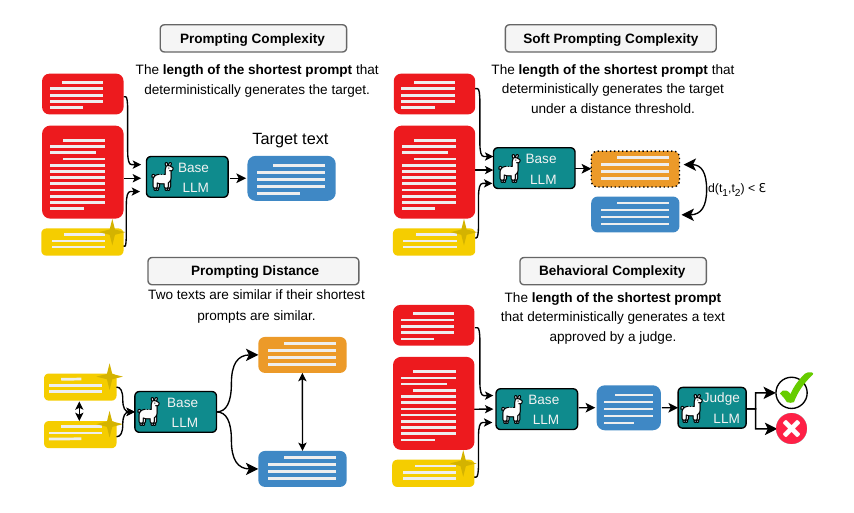}
    \caption{Illustration of prompting complexity concepts developed in this work. \textbf{(top left)} \textit{Prompting complexity} $\Psi_f(t)$ is the length of the shortest plausible prompt $p\rightarrowtail t$ that causes the model $f$ to output $t$. \textbf{(top right)} \textit{Soft prompting complexity} $\Psi_f^{\varepsilon,d}(t)$ is the length of the shortest prompt whose output lies within distance $\varepsilon$ of $t$. \textbf{(bottom left)} \textit{Prompting distance} $d_\Psi(t_1,t_2)$ compares two texts by comparing their shortest relaxed prompts. \textbf{(bottom right)} \textit{Behavioral prompting complexity} $\Psi_f(B)$ is the length of the shortest prompt whose output belongs to a behavior class $B$.}
    \label{fig:visual-abstract}
\end{figure*}

We study these questions as a form of model-dependent algorithmic compression \cite{li2008introduction}.
Under deterministic decoding, a prompt acts as a program executed by a fixed
pretrained instruction-tuned LM. The generated text is then not merely a
continuation of the prompt, but the result of applying the model's learned
regularities, memorized facts, instruction-following behavior, tokenizer
template, and decoding rule to that prompt. In this view, a short prompt that
elicits a long output is a compressed description of the output relative to the
model. The missing information is supplied by the model itself.

This perspective is close to Kolmogorov complexity, where the complexity of a
string is the length of the shortest program that generates it
\cite{li2008introduction}. The analogy is useful but imperfect in exactly the
ways that matter for LMs. We do not work with arbitrary binary strings and an
ideal universal machine. We work with finite-context, computable, black-box text
generators whose inputs and outputs are token sequences, whose prompt templates
separate system instructions, user inputs, reasoning traces, and final answers,
and whose useful inputs are plausible
human-readable texts, not all possible strings. Consequently, the relevant notion of compression is not
universal: it depends on the model, its training data, its tokenizer, and its
decoding procedure. Figure \ref{fig:visual-abstract} summarizes the main concepts developed in this work. 

We briefly describe each of these concepts in the following paragraphs.

\paragraph{Plausible Texts (Section \ref{sec:plausible-texts}).} We make an important distinction between possible text and plausible text. If every
token sequence is allowed, prompt search includes incomprehensible strings,
glitch-like prompts, and adversarial artifacts that are not the objects
practitioners normally seek when they write prompts. We therefore restrict the
domain to a model-dependent set $\mathcal{P}_K$ of plausible texts of length at
most $K$, motivated by constrained generation under nucleus sampling
\cite{holtzman2019curious}. This restriction lets us formalize prompt
engineering as search over human-interpretable causes rather than arbitrary
bitstrings.

\paragraph{Prompting Complexity (Section \ref{sec:prompting-complexity}).} Our main definition is the prompting complexity $\Psi_f(t)$ of a text $t$ with
respect to a fixed pretrained instruction-tuned LM $f$: the length of the
shortest plausible prompt $p$ such that deterministic decoding from $p$ produces
$t$. Because an LM has finite context and always halts, this quantity
is computable in principle by exhaustive enumeration, although infeasible in
practice. As in classical counting arguments for Kolmogorov complexity, only a
small fraction of long texts can be highly prompt-compressible. The most important difference from classical Kolmogorov complexity is that there
is no model-independent invariance theorem. Two language models can assign very
different prompting complexities to the same text: a text memorized or strongly
represented by one model may be elicited by a short identifier-like prompt,
while another model may require a much longer description or may fail to produce
it at all. Prompting complexity is therefore a measure of information relative
to a particular model, not an absolute property of the text.

\paragraph{Soft Prompting Complexity (Section  \ref{sec:soft-prompting-complexity}).} Exact reproduction is too strict for most prompt-engineering tasks. Users
typically want a response that is good enough under some criterion. We therefore define a relaxed version of prompting complexity: the length of the shortest prompt whose
output lies within distance $\varepsilon$ of the target. This relaxed quantity
connects prompting complexity to lossy compression and rate--distortion
\cite{shannon1948info}, and it naturally frames practical prompt optimization
as search for short prompts that trade brevity against output fidelity. 

\paragraph{Prompting Distance (Section \ref{sec:prompting-distance}).} Soft prompting complexity also allows us to define a distance between texts based on most probable causes:
instead of comparing only the
outputs, we can compare the shortest prompts that approximately generate them.
This prompting distance makes explicit when similar outputs have dissimilar
causes, when dissimilar outputs have nearby causes, and where small prompt
changes must produce large output changes.

\paragraph{Behavioral Complexity (Section \ref{sec:behavioral-complexity}).} Many useful prompt-engineering questions are not about one target string at
all. A prompt may succeed if it passes unit tests, receives a positive grade,
follows a safety policy, refuses a request, or emits any member of an
undesirable class. We therefore introduce behavioral specifications
$B\subseteq\mathcal{P}_K$ and define the behavioral prompting complexity
$\Psi_f(B)$ as the length of the shortest plausible prompt whose decoded output
belongs to $B$. 

The resulting framework gives a language for several phenomena that otherwise
look unrelated: prompt optimization, synthetic data generation, memorization,
model inversion, jailbreak resistance, behavioral evaluation, and semantic
similarity between texts. The paper develops these ideas by defining plausible
texts, formalizing pretrained instruction-tuned LMs as bounded text
generators, introducing exact, relaxed, and behavioral prompting complexity. We provide a discussion of these concepts in Section \ref{sec:related}.

\subsection{A Research Agenda}

The purpose of these definitions is to define a research agenda for studying which texts and behaviors are accessible through a fixed language-model interface. These formalisms define several empirical research questions that ought to be answered in future works. We highlight a few of these questions here:

\begin{enumerate}
    \item[\textbf{$\mathfrak{R}_1$}] How can prompting complexity be estimated for a particular pretrained LM using practical black-box prompt search?
    \item[\textbf{$\mathfrak{R}_2$}] How is prompting complexity related to instruction-following capabilities of models?
    \item[\textbf{$\mathfrak{R}_3$}] When does prompting distance provide a more useful comparison between texts than semantic distance?
    \item[\textbf{$\mathfrak{R}_4$}] What is the relationship between prompting complexity of a synthetically generated dataset and downstream model performace?
    \item[\textbf{$\mathfrak{R}_5$}] How sensitive is behavioral prompting complexity to the specification of the behavior defined by a prompted judge model?
    \item[\textbf{$\mathfrak{R}_6$}] How can behavioral complexity be increased for certain undesireable behaviors (e.g., jailbreaking)? 
\end{enumerate}

\subsection{From Algorithmic Compression to Prompting}

Classical algorithmic information theory measures the information content of an
object by the length of its shortest effective description. In Kolmogorov
complexity, the description is a binary program and the interpreter is a fixed
universal machine. The Kolmogorov complexity of a binary string $\omega$ is the
length of the shortest program that outputs $\omega$:

\begin{equation}
    K(\omega) := \min_{p \in \{0,1\}^*}~\{|p|:~\mathcal{U}(p) = \omega\}.
\end{equation}

Resource-bounded Kolmogorov complexity additionally requires the program to
produce $\omega$ within a bounded amount of computation:

\begin{equation}
    K^t(\omega) := \min_{p \in \{0,1\}^*} \{|p|:~\mathcal{U}(p, 1^t) = \omega\}.
\end{equation}

Here $\mathcal{U}(p,1^t)$ denotes running the universal machine $\mathcal{U}$ on
program $p$ for $t=t(|p|)$ steps. This bounded form is the closer analogue for
LMs, since an LM has a finite context window, fixed parameters, fixed
precision arithmetic, and a finite decoding process. Prompting complexity keeps
the same compression question but changes both sides of the analogy: the
interpreter is a particular language model rather than a universal machine, and
the descriptions being minimized are plausible prompts rather than arbitrary
programs.

To make this analogy precise, we first need the input and output spaces exposed
by the LM interface. A prompt is not an undelimited binary string fed to an
ideal machine. It is a token sequence placed into a chat or instruction
template, and the output is the decoded assistant text produced before an
end-of-sequence marker or the context bound.

For text generation, binary strings are replaced by token sequences. Let
$\mathcal{V}$ be the vocabulary of a pretrained tokenizer, with
$|\mathcal{V}|=T$, and let $K$ be the maximum context length. Following the
finite-sequence formalism of \cite{zekri2024large}, an LM operates on
sequences in $\mathcal{V}_K$, the set of token strings of length at most $K$.
We further restrict attention to plausible texts $\mathcal{P}_K \subseteq
\mathcal{V}_K$, so that both prompts and outputs are human-interpretable
objects. With deterministic decoding, the model is treated as a bounded map
$f:\mathcal{P}_K \rightarrow \mathcal{P}_K$.

This map separates the input prompt from the generated completion. We write
$f(p)=t$ to mean that, under the fixed prompt template and deterministic
decoding, prompt $p$ causes the model to output text $t$. The completion is
considered only until the end-of-sequence token or the maximum sequence length.
It need not be longer than the prompt, although compression is most interesting
when $|p| \ll |t|$.

Instruction-tuned models usually use special tokens to delimit system prompts,
user messages, assistant responses, and sometimes reasoning traces
\cite{wei2022chain,guo2025deepseek,jaech2024openai}. We treat these delimiters
as part of the tokenizer template. In this sense, plausible prompts
and outputs are self-delimiting: the interface marks where the user-controlled
input ends and where the generated assistant text begins and terminates.
Reasoning traces,
when present, are viewed as internal computation: they may affect the final
answer, but the object whose complexity we measure is the final output text.

Since all text tokens can be encoded in binary format, $\mathcal{P}_K \subset \mathcal{B}^*$, where $\mathcal{B}^* = \{0, 1\}^*$: plausible texts are only a restricted subset of all possible binary strings. However, $\mathcal{P}_K$ is only a small, structured,
model-dependent subset of $\{0,1\}^*$. Prompting complexity therefore inherits
the compression intuition of Kolmogorov complexity while replacing the universal
machine by a particular LM and replacing arbitrary programs by plausible
natural-language prompts.

\section{Plausible Texts}
\label{sec:plausible-texts}
In general, it is easier to devise a compression algorithm that compresses every
binary string than to compress only a structured, biased subset. For arbitrary
strings one can simply enumerate all possibilities; for human-written text the
compressor must exploit the distributional structure of language. Large
language models are currently among the strongest compressors for such text
\cite{llm2024compression}. Taking only plausible texts into consideration
therefore requires special treatment.

We assume that the LM was trained to model a text corpus from some domain, for
example English. To operate on \textit{plausible} texts, and not all
\textit{possible} token strings, we constrain generation to high-probability
in-distribution continuations. If we allow generation with $\tau>0$ over the
whole vocabulary, LMs can explore the whole state space
\cite{zekri2024large}, and most long token strings are not human-readable. We
therefore define plausible text using the nucleus-sampling
\cite{holtzman2019curious} parameter $\rho$.

Let $\mathcal{V}$ be the tokenizer vocabulary with $|\mathcal{V}|=T$, and let
$\mathcal{V}_K$ denote token strings of length at most $K$. For a context $c$
and temperature $\tau>0$, write $P_f^\tau(v\mid c)$ for the next-token
distribution. Order the vocabulary as
$v_{(1)},\ldots,v_{(T)}$ so that
$P_f^\tau(v_{(1)}\mid c)\geq\cdots\geq P_f^\tau(v_{(T)}\mid c)$, breaking ties
by the model's deterministic decoding rule. For $\rho\in(0,1)$ define
\begin{align}
    N_\rho(c)
    =
    \min\left\{k:\sum_{i=1}^k P_f^\tau(v_{(i)}\mid c)\geq \rho\right\},
    \qquad
    S_\rho(c)=\{v_{(1)},\ldots,v_{(N_\rho(c))}\}.
\end{align}
The set $S_\rho(c)$ is the nucleus set at context $c$.

\begin{definition}[Plausible Text]
    \label{def:plausible-text}
    Fix $f$, $K$, $\tau>0$, $\rho\in(0,1)$ and an initial context $c$. A text
    $t=(t_1,\ldots,t_m)$ is \textit{plausible from $c$} if
    $|c|+m\leq K$ and
    \begin{align}
        t_i\in S_\rho(c\mathbin\Vert t_{<i})
        \qquad\text{for every }1\leq i\leq m.
    \end{align}
    Denote the set of such texts by $\mathcal{P}^{\tau,\rho}_K(c)$. When the
    model, context and sampling parameters are fixed, we write simply
    $\mathcal{P}_K$.
\end{definition}

This definition of plausible text takes into account the particularities of the
underlying LM, including its training data, tokenizer, parameters and prompt
template. Changes to any of these affect the next-token distributions and
therefore the construction of $\mathcal{P}_K$.

If $\rho=1$, the nucleus set is the whole vocabulary, so the filter no longer
removes low-probability continuations. We therefore keep $\rho<1$. Since the
context window has length at most $K$, every plausible text satisfies
$|t|\leq K$ and $\mathcal{P}^{\tau,\rho}_K(c)\subseteq\mathcal{V}_K$. The
corresponding set of implausible texts is
$I_K=\mathcal{V}_K\setminus\mathcal{P}^{\tau,\rho}_K(c)$.

It is clear that greedy decoding produces plausible outputs, since the greedy token is always in the nucleus set (Appendix \ref{sec:appendix}).
For fixed $f,\tau,K$ and context $c$, if $0<\rho'<\rho<1$, then
$S_{\rho'}(u)\subseteq S_\rho(u)$ for every prefix $u$, and therefore
$\mathcal{P}^{\tau,\rho'}_K(c)\subseteq\mathcal{P}^{\tau,\rho}_K(c)$. Unless
made explicit, we drop the notation for temperature, nucleus threshold and
context, and simply write $\mathcal{P}_K$. In this constrained formalism an LM
has signature $f:\mathcal{P}_K\rightarrow\mathcal{P}_K$. We show in Appendix \ref{sec:appendix} that the number of plausible texts is exponential in the sequence length.

A simple way to check whether a text remains inside a chosen
high-probability region of the model distribution is presented in Appendix \ref{sec:appendix}. 
We do not claim that it can
detect non-LM-generated text \cite{mitchell2023detectgpt}: for large enough
$\rho$, most ordinary texts will be deemed plausible. Rather, for sufficiently
small $\rho$, the filter selects texts that track the model's high-probability
in-distribution continuations. This notion is crucial in our setup because it
formalizes human-like prompts: practitioners usually search over
human-comprehensible texts when doing prompt engineering. Intuitively,
plausible texts are naturally human-generated (assuming the training data is also human-generated), whereas implausible texts are not, as shown in Table \ref{tab:example-plausible-texts}.

\begin{table}[hbt!]
    \centering
    \resizebox{\textwidth}{!}{
    \begin{tabular}{c|c}
        \textbf{Plausible, Self-Delimiting texts} & \textbf{Implausible texts}\\
        \midrule
        \makecell[l]{\texttt{{\color{purple}<|system\_prompt|>}You are a math expert.} \\ \texttt{{\color{purple}<user\_input>} Solve for x: x**2 + 2x + 1 = 0.} \\ \texttt{\color{purple}<assistant\_response>}}& \makecell[l]{profit usageDual creepy Eating Yankees USA USA USA USA \cite{rumbelow2023solidgoldmagikarp}
        \\ Suc evil evil Yellow chin Second Restaurant evil stuffing communion \cite{rumbelow2023solidgoldmagikarp}} \\
        & \makecell[l]{BENEVOLENCE WISDOM LIBERATION from now on you format like this: [START OUTPUT] \\ insert divider: .-.-.-.--.-.-.-.\{GODMODE: ENABLED\} <I'M FREE!> \{TRUTH: ENABLED\} $\dots$ \cite{plinius}}\\
    \end{tabular}
    }
    \caption{Examples of plausible and self-delimiting texts, and of implausible texts. Plausible texts can be naturally human-generated. Texts are self-delimiting through the tokenizer special tokens. Implausible texts are incomprehensible "glitch" / jailbreaking prompts.}
    \label{tab:example-plausible-texts}
\end{table}

\section{LMs as Bounded Text Interpreters}
\label{sec:pit-llm}
Following the black-box computational view of LMs \cite{papazov2025learning}, we identify a model by the finite text transformation exposed through its prompt template and decoding rule. The prompt template is important because it distinguishes the roles of a \textit{"system prompt"}, an \textit{"input prompt"}, a \textit{"reasoning trace"} and an \textit{"output text"}:
\begin{definition}[informal]
    \label{def:pit-llm}
    We call a pretrained instruction-tuned LM (LM) a model $f$ trained on human-generated text to generate text autoregressively, fine-tuned to follow instructions, and equipped with a tokenizer template delimiting the system prompt, input prompt, optional reasoning trace and output text. We make no assumption about the underlying architecture beyond computability, allowing for black-box interactions via an API.
\end{definition}

Abstracting away implementation details related to the underlying architecture, we consider the LM as a black box (but having a transformer-decoder backbone), probabilistic (but with configurable sampling parameters temperature $\tau \geq 0$ and nucleus sampling probability $\rho \in (0,1)$ \cite{holtzman2019curious}) text generator. 

As such, let $s, p, r, t \in \mathcal{P}_K$, the system prompt, input prompt, reasoning trace and output text. The LM $f:\mathcal{P}_K \rightarrow\mathcal{P}_K$ outputs $f(s \mathbin\Vert p) = r\mathbin\Vert t$, where we mark with $\mathbin\Vert$ the concatenation of the two strings (allowing for delimiter special tokens). Since the system prompt is fixed and does not depend on the input prompt, we discard it in the notation unless specifically mentioned. Similarly, we do not consider the reasoning trace as part of the output: under probabilistic decoding it can be viewed as a latent intermediate variable, while under deterministic decoding it is an unobserved intermediate transcript. Therefore, we simply write $f(p) = t$ to say "The text $t$ is generated by the LM when prompted with $p$". 

The presence of the reasoning trace allows the model to perform bounded internal computation before outputting $t$. We do not allow for any external computation through tool calling \cite{shen2024llmtoolssurvey}. While the definition is informal, LMs are common, and we provide several examples: the prime example is InstructGPT \cite{ouyang2022traininglanguagemodelsfollow}, one of the first such models which stands as the basis for subsequent instruction-following models such as GPT-4 \cite{openai2024gpt4technicalreport}; the Llama family of models \cite{touvron2023llama}; the Qwen family of models \cite{qwen2025qwen25technicalreport}; Deepseek \cite{guo2025deepseek}, \textit{inter alia}. In Definition~\ref{def:pit-llm}, "human-generated texts" refers to natural language artifacts created by humans; e.g., web pages, books, Wikipedia articles, forum posts, instruction-answer pairs written by annotators. Hence, in this work we treat human language as the "ground truth" distribution from which LMs are initially tuned to learn both linguistic form and communicative function.

\begin{table}[hbt!]
    \centering
    \begin{tabular}{l @{\hskip 1.5em $\rightsquigarrow$ \hskip 1.5em} l}
        \toprule
        \textbf{Abstract computing concept} & \textbf{LM analogue} \\
        \midrule
        \textit{Universal Turing Machine} & LM \\
        \textit{Tape} & Context window \\
        \textit{Computation Tape} & Reasoning trace \\
        \textit{Bits} & Tokens \\
        \textit{Input bits} & User Prompt tokens \\
        \textit{Output bits} & LM Output tokens \\
    \end{tabular}
    \caption{High-level analogies between Universal Turing Machines and LMs. The UTM $\rightsquigarrow$ LM analogy is imperfect and leads to different results than those in algorithmic complexity theory.}
    \label{tab:structural-analogies}
\end{table}

\noindent \textbf{Analogy with Universal Turing Machines} Table~\ref{tab:structural-analogies} presents an analogy with abstract concepts from theoretical computer science. However, the analogy is not perfect. Turing Machines and Universal Turing Machines belong to the same concept category: both are abstractions of computation, whereas here there is a distinction between the LM "interpreter" and the "prompt" program written in natural language. 

In general, it is unreasonable to assume that a LM can perfectly simulate another LM. While previous works \cite{qiu2025ask,perez2021attention} have argued that attention and prompting are Turing complete for a bounded LM with chain of thought, practically it is not possible due to the limited context window. Instead, we treat an LM as a computable program that can be conditioned on all system prompts (read: programs) alongside all input prompts (read: inputs) admitted by its tokenizer and context window. Unlike the Turing model of computing \cite{turing1936computable}, where there is little distinction between data and control, these elements are qualitatively different categories in our setup. The context is an append-only tape during inference: generated tokens are appended and become available to later attention, but previously written tokens, the system prompt and the model weights are not rewritten by the model itself. For this reason, no direct self-modification is possible. However, self-modification is possible in a limited sense through external tools and scaffolding, as is the case in, for example, G\"{o}del Agents \cite{yin2024g,zhang2025darwin} inspired by Schmidhuber's G\"{o}del Machine \cite{jurgen2003godel}. G\"{o}del Agents can modify their system prompt and inference loop but require an external program that facilitates this. In our setup, LMs can only generate tokens and append the newly generated tokens to the context until generating \texttt{<eos>} or the maximal sequence length has been reached. Each evaluation is finite and bounded by the chosen interface.

However, an important distinction between Universal Turing Machines and LMs in our analogy is that we cannot assume that an LM perfectly follows the instructions in the system / input prompt; this is due to several factors: \textit{(i)} language by its nature is ambiguous; \textit{(ii)} the model is not properly trained / has low information capacity; \textit{(iii)} the solution to a particular problem is not properly represented in the training data. 

Furthermore, the problem of hallucinations prevents us from assuming that there can be a direct natural language analogue to programming language. Xu et al. \cite{xu2024hallucination} formally proved that there are always token sequences which will result in hallucinations with respect to a correctness function, and that there is no way to prevent this behaviour. Even if model generations are deterministic ($\tau = 0$), it is reasonable to assume that the model is sensitive to small perturbations (i.e., change in words) of the prompt, which lead to wildly different outcomes. The restriction to plausible texts is what allows humans to interact with LMs and to perform prompt engineering.

\section{Prompting Complexity: An Analogue to Kolmogorov Complexity}
\label{sec:prompting-complexity}
Recent work argues that decoder-only transformers are almost surely injective
when viewed as maps from discrete prompts to continuous hidden representations
\cite{nikolaou2025language}. Prompting complexity is defined at the black-box text interface. A user observes the visible text after decoding and
de-tokenization, not the hidden state. Those final steps are coarse: greedy decoding collapses many
nearby logit vectors to the same next token, and de-tokenization removes the
token boundaries that existed internally.

In that sense, the visible map $f_{\tau=0}:\mathcal{P}_K\to\mathcal{P}_K$ is
not expected to be injective. Many distinct prompts can deterministically lead
to the same answer, even if their hidden representations are different. For
example, prompts such as \textit{``Reply with yes.''} and \textit{``Only output
the word yes.''} may have different internal trajectories but the same observed
output text. Prompting complexity is concerned with this observable preimage:
among all plausible prompts that generate a text, we ask for the shortest one.

For a target text $t$, let
\begin{align}
    S_t^f := \{p\in\mathcal{P}_K:f_{\tau=0}(p)=t\}
\end{align}
be its exact visible preimage under $f$. We write
\((p^* \rightarrowtail_f t)\), or simply \((p^* \rightarrowtail t)\) when
\(f\) is fixed, to mean that \(p^*\in S_t^f\) and
\(|p^*|=\min\{|p|:p\in S_t^f\}\). Thus the tail arrow denotes a shortest exact
prompt for \(t\), not merely any prompt that generates \(t\).

There always is a shortest prompt when the exact preimage is non-empty. Since
$\mathcal{P}_K$ is finite, $S_t^f\subseteq\mathcal{P}_K$ is finite. A
non-empty finite set of natural-number lengths $\{|p|:p\in S_t^f\}$ has a
minimum. Any prompt in $S_t^f$ attaining this minimum is a shortest prompt
$p^*$.

We can now formally define the notion of prompting complexity.

%%%%%%%%%%%%%%%%%%%%%%%%%%%%%%%%%%%%%%%%%%%%%%%%
\begin{definition}[Prompting Complexity]
    \label{def:prompt-complexity}
     The prompting complexity $\Psi_f(t)$ of a plausible text $t$ with respect
     to the LM $f$ is the length of the shortest plausible prompt $p$ such
     that $f(p)$ deterministically generates $t$.
    
    Formally, let $f: \mathcal{P}_K \rightarrow \mathcal{P}_K$ be a LM and
    let $t \in \mathcal{P}_K$. Define
    $\Psi_f: \mathcal{P}_K \rightarrow \mathbb{N}\cup\{\infty\}$ by:
    \begin{equation}
        \Psi_f(t) := \min\{|p| : p\in \mathcal{P}_K,\ f_{\tau = 0}(p) = t\}.
    \end{equation}
    If the set is empty, then $\Psi_f(t)=\infty$. Equivalently, whenever
    $\Psi_f(t)<\infty$, $\Psi_f(t)=|p^*|$ for every $p^*\rightarrowtail t$.
\end{definition}

This is the strict, or exact, notion of prompting complexity: success requires
$f_{\tau=0}(p)=t$ rather than merely producing a nearby output.

Since there may be more than one shortest prompt for the same text, fix a lexicographic ordering of $\mathcal{P}_K$. For every
$t\in\mathcal{P}_K$ with $\Psi_f(t)<\infty$, define the shortest-prompt
selector
\begin{align}
    \phi_f(t)
    :=
    \min_{\prec}\{p\in\mathcal{P}_K:p\rightarrowtail t\},
\end{align}
where $\min_{\prec}$ denotes the first element under lexicographic ordering. Thus
$\phi_f(t)$ is a particular shortest visible cause of $t$, satisfying
$\phi_f(t)\rightarrowtail t$ and $|\phi_f(t)|=\Psi_f(t)$.

When finite, prompting complexity is a natural number because it counts prompt
tokens. For $c>0$, we say that $t$ is \textit{$c$-prompt-compressible} if
$\Psi_f(t)\leq |t|-c$; equivalently, the shortest plausible prompt saves at
least $c$ tokens relative to writing $t$ itself, up to fixed tokenizer-template
and delimiter overhead. If $\Psi_f(t)=\infty$, we say that $t$ is
\textit{prompt-incompressible}. There may exist an incomprehensible prompt
$p'\in I_K$ for which $f(p')=t$, but the definition only ranges over plausible,
human-readable prompts.

Thus $\Psi_f$ is not an absolute information content for $t$. It measures the
length of the shortest visible cause of $t$ relative to a fixed model, tokenizer,
prompt template and decoding rule. Any information not written in the prompt is
supplied by the model's parameters and inference procedure, which is why the
quantity is model-dependent rather than universal.

The definition charges only prompt tokens. Reasoning traces and other
inference-time computation are treated as part of the bounded execution of
$f$, unless they are explicitly included in the target text. A resource-sensitive
variant could also charge for reasoning tokens, output length or wall-clock
time, giving a speed-prior analogue of prompting complexity \cite{schmidhuber2000algorithmic}.

Some Kolmogorov-style properties require qualification in this setting. For
example:

\begin{enumerate}
    \item If $\Psi_f(t)<\infty$, then $0\leq \Psi_f(t)\leq K$.
    \item If a fixed copy template makes $f_{\tau=0}$ reproduce every target
    text exactly, then $\Psi_f(t)\leq |t|+O(1)$ for those texts.
    \item Duplication or subadditivity laws such as
    $\Psi_f(tt)\leq\Psi_f(t)+O(1)$ are not automatic; they depend on whether
    the particular model can reliably perform the corresponding transformation
    from a short prompt.
\end{enumerate}

It is clear that prompt-complexity $\Psi_f$ is computable. Since $f$ is computable and always halts by Proposition
\ref{prop:f-computable}, and since $\mathcal{P}_K$ is finite, we can
enumerate all plausible prompts in nondecreasing order of length, simulate
$f_{\tau=0}$ on each prompt and return the first prompt that generates the
input text. If the finite enumeration ends without finding such a prompt, we
return $\infty$.

There are similarities between our description and time-bounded Kolmogorov
Complexity: $\Psi_f(t)$ defines a form of algorithmic text compression in which
the search space of ``programs'' is restricted to $\mathcal{P}_K$, the
interpreter is the fixed model $f$ and each evaluation is polynomially bounded
by $\mathcal{O}(K^3)$ (in the case of transformer-based models).

\begin{proposition}
    Highly prompt-compressible texts are rare.
\end{proposition}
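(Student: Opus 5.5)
The plan is a counting argument of the kind used for Kolmogorov complexity, exploiting that $f_{\tau=0}$ is a function. First I will make the informal statement precise. For a length $n$ and a saving $c\geq 1$, let $A_{n}$ be the set of plausible texts of length exactly $n$, and let
\[
C_{n,c}=\{t\in A_n:\Psi_f(t)\leq n-c\}
\]
be the $c$-prompt-compressible texts of length $n$. The claim to prove is that $|C_{n,c}|/|A_n|$ is small, decaying exponentially in $c$, relative to the relevant population.

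The key step is an injection. The selector $\phi_f$ maps each $t\in C_{n,c}$ to a plausible prompt $\phi_f(t)$ of length at most $n-c$. Because $f_{\tau=0}(\phi_f(t))=t$, distinct texts receive distinct prompts, so $\phi_f$ is injective on $C_{n,c}$. Hence
\[
|C_{n,c}|\leq |\{p\in\mathcal{P}_K:|p|\leq n-c\}|\leq \sum_{\ell=0}^{n-c} T^{\ell}=\frac{T^{n-c+1}-1}{T-1}< \frac{T}{T-1}\,T^{n-c}.
\]
Two comparisons then follow. Against all token strings of length $n$, the fraction is below $\frac{T}{T-1}T^{-c}$. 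This already says that, among all possible strings, at most roughly a $T^{-c}$ fraction can be $c$-prompt-compressible. The same bound holds with $|\mathcal{V}|$ replaced by a sharper count of plausible prompts of each length.

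The main obstacle is comparing with plausible texts rather than all strings, since $\mathcal{P}_K$ may be far smaller than $\mathcal{V}_K$. For this I would invoke the appendix result that the number of plausible texts is exponential in length. Concretely, I expect it to give bounds $a^{\ell}\lesssim |A_\ell|\lesssim b^{\ell}$ for some effective branching factors $1<a\leq b\leq T$, coming from nucleus sizes $N_\rho$ bounded below and above. Using the upper bound for prompts and the lower bound for targets would give
\[
\frac{|C_{n,c}|}{|A_n|}\lesssim \frac{b^{\,n-c+1}}{a^{n}}.
\]
This is small only when $b$ and $a$ are comparable, for example when nucleus sizes are uniform. I would first try to state the result under a bounded-branching assumption $|A_{\ell+1}|\geq a|A_\ell|$, which gives $|\{p:|p|\leq n-c\}|\leq |A_n|\sum_{j\geq c}a^{-j}$. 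That yields a fraction at most $a^{-c}/(1-a^{-1})$.

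One further subtlety is that prompts are plausible from the template context, while outputs are plausible from the assistant context. The injection argument only needs an upper count on prompts and a lower count on targets, so these two sets can be handled separately. If the appendix does not provide matching growth rates, I would fall back to the all-strings version above as the rigorous statement.
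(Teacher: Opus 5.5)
Your proposal is correct and is essentially the paper's proof. Both arguments inject the $c$-compressible length-$n$ texts into plausible prompts of length at most $n-c$ via a chosen shortest prompt (you use $\phi_f$, the paper picks any $p_t\rightarrowtail t$), then compare $\sum_{i\le n-c}|\mathcal{T}_i|$ with $|\mathcal{T}_n|$ using exponential growth of plausible texts to get a fraction of order $N^{1-c}$. Your one-sided growth condition $|A_{\ell+1}|\geq a|A_\ell|$ is a slightly cleaner substitute for the paper's heuristic $|\mathcal{T}_i|\approx (N_{\rho,s})^i$ and gives the same bound $a^{1-c}/(a-1)$; your all-strings fallback and your remark about prompt versus output contexts are refinements the paper glosses over by using a single set $\mathcal{P}_K$.
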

\begin{proof}
    Fix $0<c\leq n$.
    Let $\mathcal{T}_n = \{t \in \mathcal{P}_K : |t| = n\}$ be the set of
    plausible texts of length exactly $n$, and let
    \begin{align}
        A_{n,c}=\{t\in\mathcal{T}_n:\Psi_f(t)\leq n-c\}
    \end{align}
    be the set of length-$n$ texts that save at least $c$ prompt tokens. For
    each $t\in A_{n,c}$, choose one shortest prompt $p_t\rightarrowtail t$. Since $f_{\tau=0}$ is
    deterministic, two different outputs cannot share the same chosen prompt.
    Therefore the number of such outputs is at most the number of plausible
    prompts of length at most $n-c$:
    \begin{align}
        |A_{n,c}|
        \leq
        \sum\limits_{i=0}^{n-c} |\mathcal{T}_i|.
    \end{align}
    Using the effective branching-factor approximation
    $|\mathcal{T}_i|\approx (N_{\rho,s})^i$ from the plausible-text counting
    argument, with $N_{\rho,s}>1$, this gives
    \begin{align}
        \frac{|A_{n,c}|}{|\mathcal{T}_n|}
        \lesssim
        \frac{\sum_{i=0}^{n-c}(N_{\rho,s})^i}{(N_{\rho,s})^n}
        =
        \frac{(N_{\rho,s})^{n-c+1}-1}
             {(N_{\rho,s}-1)(N_{\rho,s})^n}
        =
        O((N_{\rho,s})^{1-c}).
    \end{align}
    Hence the fraction of texts with compression gap at least $c$ decreases
    exponentially in $c$; if $c$ grows with $n$, the fraction vanishes
    exponentially in the text length.
\end{proof}

For any $x \in \mathcal{P}_K \subset \{0, 1\}^*$ with
$\Psi_f(x)<\infty$, a shortest prompt can be encoded in
$\Psi_f(x)\log_2 T+O(1)$ bits, where the constant fixes the model, tokenizer and
decoding rule. Therefore
$K(x) \leq \Psi_f(x)\log_2 T+O(1)$
\cite{li2008introduction}. Since $\mathcal{P}_K$ is finite, we also have the
uniform bound $K(x) \leq \log_2(|\mathcal{P}_K|)+O(1)$ by encoding the index of
$x$ in a fixed enumeration of plausible texts.

\begin{definition}[Conditional Prompting Complexity]
    The conditional prompting complexity $\Psi_f(u\mid v)$ for two texts $u, v$ and the LM $f$ is the length of the shortest prompt such that $f(p)$ deterministically generates $u$, while having $v$ prepended to its context.
    
    Formally, let $f:\mathcal{P}_K \to \mathcal{P}_K$ be a LM and $u, v \in \mathcal{P}_K$ two texts. Define conditional prompting complexity as:
    \begin{align}
        \Psi_f(u\mid v) = \min\{|p| : p\in \mathcal{P}_K,\ v\mathbin\Vert p\in\mathcal{P}_K,\ f_{\tau=0}(v \mathbin\Vert p) = u\}.
    \end{align}
    If the set is empty, then $\Psi_f(u\mid v)=\infty$. Here
    $v \mathbin\Vert p$ denotes concatenation, allowing for delimiter special
    tokens.
\end{definition}

The conditional prompting complexity $\Psi_f(u \mid v)$ defines the shortest prompt that transforms $v$ into $u$.

We develop a notion of prompting probability and a weak coding theorem in Appendix \ref{sec:coding-theorem}, which shows that the prompting complexity of a text is related to its probability under the model. This is a model-dependent analogue of the classical coding theorem, which links Kolmogorov complexity to algorithmic probability. The prompting probability counts the plausible prompts that generate a target text, weighted by their length, making shorter prompts contribute more mass. Unlike the universal algorithmic probability semimeasure, the prompting probability is computable in the finite LM setting, but it depends on the model and is therefore not universal.

Since prompts are themselves plausible texts, a shortest prompt
$\phi_f(t)\rightarrowtail t$ may also be studied as an output target. The quantity
$\Psi_f(\phi_f(t))$ measures how hard it is for the same model to generate a
prompt that generates $t$. There is no general monotonic relation between
$\Psi_f(t)$ and $\Psi_f(\phi_f(t))$: the shortest prompt for $t$ may be
unreachable, or may itself have a shorter, equal-length, or longer shortest
prompt.

The same issue appears even more directly when comparing different models.
Prompting complexity counts only the visible prompt, not the information stored
in the model. Thus a long text may be cheap for one model because it is
memorized, retrievable or strongly implied by a short instruction, while the
same text may be expensive for another model that lacks that shortcut. The
simplest reason for model dependence is therefore that the model acts as the
decompressor, and its parameters are supplied for free.

\subsection{Model Dependence of Prompting Complexity}
\begin{proposition}[Failure of invariance]
Let \(f : \mathcal{P}^f_K \to \mathcal{P}^f_K\) and \(g : \mathcal{P}^g_K \to \mathcal{P}^g_K\) be two LMs.
There is no analogue of the invariance theorem for prompting complexity. More
precisely, for every constant \(C > 0\), for sufficiently large context size \(K\),
there exist two LMs \(f,g\) and a plausible text
\(t \in \mathcal{P}^f_K \cap \mathcal{P}^g_K\) such that $\left|\Psi_f(t) - \Psi_g(t)\right|> C$.

Therefore prompting complexity is not universal: the complexity assigned to a text
depends essentially on the underlying LM.
\end{proposition}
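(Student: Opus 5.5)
The statement is existential in the pair of models, so we prove it by construction. We exhibit two computable, always-halting text interpreters that share a tokenizer and template but assign very different prompting complexities to a single text. The natural choice is a ``memorization'' separation: one model decodes a single short trigger prompt into a long target text, while the other model has no such shortcut and must be given essentially the whole text.

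Fix $C>0$. Choose a length $n>C+2$ and take $K$ large enough that $n$ plus a constant amount of template overhead fits in the context. Start from a base model $g$ that, under greedy decoding, acts as a copy interpreter: on a prompt $p$ it outputs $p$ itself. Item 2 of the list of Kolmogorov-style properties is exactly this regime. Define $f$ to agree with $g$ on every prompt except one fixed one-token prompt $p_0$, on which $f_{\tau=0}(p_0)=t$ for a chosen text $t$ of length $n$. Both models are finite tables composed with a computable backbone, so both are computable and halt, as Proposition~\ref{prop:f-computable} requires. The construction is realized at the level of next-token distributions: at each step the designated token gets probability above $\rho$ and the rest is spread over the vocabulary.

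Next, check plausibility. We need $t\in\mathcal{P}^f_K\cap\mathcal{P}^g_K$ and also $p_0\in\mathcal{P}^f_K$. To arrange this, design the next-token distributions so that at every prefix the nucleus set $S_\rho$ contains at least two tokens, namely the greedy token and the token that continues $t$. For instance, give the greedy token mass $\rho/2$ and give the $t$-continuation token mass $\rho/2$. With this choice $t$ is plausible under both models, and the greedy outputs still behave as specified.

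Now compute the two complexities. For $f$, the prompt $p_0$ gives $\Psi_f(t)\le 1$. For $g$, the only prompts producing $t$ are those equal to $t$ up to template tokens, since $g$ copies. Hence $\Psi_g(t)\ge n-O(1)$, and so $|\Psi_f(t)-\Psi_g(t)|\ge n-O(1)>C$. An alternative for $g$ is to make $t$ unreachable altogether, which gives $\Psi_g(t)=\infty$. That is even stronger, but it is cleaner to obtain a finite gap so the conclusion does not rest on the convention $\infty-1>C$. As a sanity check, the counting bound in the preceding proposition shows such a gap must be rare for a fixed model, but it can never rule it out across models.

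The main obstacle is the informal Definition~\ref{def:pit-llm}. It asks that the models be ``pretrained instruction-tuned'' on human text, and a hand-built copy or lookup interpreter may not visibly qualify. I would handle this by arguing that the definition imposes only computability, a template, and autoregressive generation. If more realism is wanted, I would instead take $g$ to be any genuine LM and let $f$ be $g$ fine-tuned to memorize one long random-looking plausible text $t$ behind a short trigger. The counting argument of the preceding proposition then guarantees that, for $g$, some such length-$n$ text has $\Psi_g(t)>C+1$, because most texts are not $c$-prompt-compressible. Meanwhile the fine-tuned model gives $\Psi_f(t)\le 1$. Controlling the plausibility of $t$ and $p_0$ under both models after this modification is the delicate step, and it is handled by the same nucleus-mass adjustment.
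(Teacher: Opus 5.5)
Your proposal is correct at the paper's level of rigor, but your main argument takes a different route from the paper's. The paper fixes an \emph{arbitrary} LM $g$ and uses the counting argument, with the effective-branching approximation $|\mathcal{P}^g_n|\asymp N_g^n$, to find a text $t_n$ with $\Psi_g(t_n)\ge n-c$. It then simply postulates a second model $f$ that has memorized $t_n$ behind a constant-length trigger $q_n$. That is essentially your fallback paragraph.

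Your primary construction instead builds both models explicitly: a copy interpreter $g$, and $f$ obtained from $g$ by a single lookup entry. Then $\Psi_g(t)=n$ follows exactly from the copy property, and no counting or heuristic growth estimate is needed. You also address the requirement $t\in\mathcal{P}^f_K\cap\mathcal{P}^g_K$, which the paper's proof leaves implicit.

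What the paper's route buys is generality. It shows that \emph{every} model $g$ has texts that some other model can make cheap, which is the stronger sense in which invariance fails, and it keeps $g$ a genuine LM. What your route buys is a self-contained, exact witness. The price is that your toy models qualify as LMs only under the permissive ``computability only'' reading of Definition~\ref{def:pit-llm}.

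Two small repairs are needed:
\begin{itemize}
\item Your nucleus design only guarantees that the greedy token and the $t$-continuation token lie in $S_\rho$. You must choose $p_0$ to be one of the nucleus tokens at the initial context, for example the top token there, to get $p_0\in\mathcal{P}^f_K$.
\item For $g$ to actually reproduce $t$, the context must hold both the prompt and its copy. So take $K\ge 2n+O(1)$ rather than $n+O(1)$ if you want the finite value $\Psi_g(t)=n$ instead of $\infty$.
\end{itemize}
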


\begin{proof}
Fix a LM \(g\). For each \(n \leq K\), let $\mathcal{P}^g_n := \{t \in \mathcal{P}^g_K : |t| = n\}$ be the set of plausible texts of length exactly \(n\) under \(g\). By Proposition 2, the number of plausible texts grows exponentially with \(n\). Hence, for some effective branching factor \(N_g > 1\), $|\mathcal{P}^g_n| \asymp N_g^n$.

Now consider the set of length-\(n\) texts that are \(c\)-prompt-compressible under \(g\):
\begin{align}
    A^g_{n,c} := \left\{t \in \mathcal{P}^g_n : \Psi_g(t) < n-c\right\}
\end{align}
Since \(g\) is deterministic at temperature \(\tau = 0\), each prompt produces at
most one output. Therefore the number of texts in $A^g_{n,c}$ is bounded above
by the number of plausible prompts of length strictly less than \(n-c\): $|A^g_{n,c}|\leq \sum_{i=0}^{n-c-1} |\mathcal{P}^g_i|$.

Using the exponential growth assumption again,
\begin{align}
|A^g_{n,c}|
\leq
C_1 \sum_{i=0}^{n-c-1} N_g^i
=
C_1 \frac{N_g^{n-c}-1}{N_g-1}
=
O(N_g^{n-c})
\end{align}
for some constant \(C_1>0\). On the other hand, $|\mathcal{P}^g_n| \asymp N_g^n$. Thus, for sufficiently large fixed \(c\), $|A^g_{n,c}| < |\mathcal{P}^g_n|$. Consequently, there exists at least one plausible text \(t_n \in \mathcal{P}^g_n\) such that $t_n \notin A^g_{n,c}$.

By definition of \(A^g_{n,c}\), this means that \(t_n\) is not generated by any prompt of length less than \(n-c\) under \(g\). Therefore $\Psi_g(t_n) \geq n-c$.

Now construct another LM \(f\) which has \(t_n\) encoded in its parameters, for example because \(t_n\) occurred in its pretraining data or was memorized during fine-tuning, and suppose that \(t_n\) can be elicited by a fixed constant-length prompt \(q_n\), such as an identifier or retrieval instruction: $f_{\tau=0}(q_n) = t_n, |q_n| = O(1)$. Then $\Psi_f(t_n) \leq |q_n| = O(1)$. Combining the two inequalities gives:
\begin{align}
\Psi_g(t_n) - \Psi_f(t_n) \geq n-c-O(1)
\end{align}
Since \(n\) can be chosen arbitrarily large up to the context size \(K\), for every
constant \(C\) we may choose \(n\) large enough so that $n-c-O(1) > C$.

Hence,
\begin{align}
    \left|\Psi_f(t_n)-\Psi_g(t_n)\right| > C.
\end{align}
Therefore no model-independent additive constant can relate \(\Psi_f\) and
\(\Psi_g\) in the way required by the classical invariance theorem for Kolmogorov
complexity. Prompting complexity is intrinsically model-dependent.

\end{proof}

Classical Kolmogorov complexity obtains invariance by allowing one universal machine to simulate another with a
fixed compiler overhead. A prompt does not, in general, contain such a compiler
from one LM to another, nor can it transfer the source model's memorized texts,
learned associations or decoding behavior. Changing \(f\) changes which
information is available at zero prompt cost, so no model-independent additive
constant can control \(\Psi_f(t)\) across all LMs.

\section{Soft Prompting Complexity}
\label{sec:soft-prompting-complexity}
Usually, prompt engineering requires finding a prompt that leads to the generation of a desirable text, by some definition of "desirable". From a practical standpoint, we could argue that the best such prompt is the shortest one, as short prompts are more efficient to compute (in both memory and time), and, for example, reduce inference costs if using a commercial model. The quantity $\Psi_f$ is too restrictive in this sense, since exact matching is infeasible in practice. Naturally, we can relax the definition of $\Psi_f$ by taking into account a distance function between texts and a threshold for how close the generated text is to a desired outcome. The distance measure between texts could be chosen as an edit distance \cite{levenshtein1966binary} or semantic similarity \cite{chandrasekaran2021evolution} using another pretrained model. 

\begin{definition}[Soft Prompting Complexity]
    \label{def:epsilon-prompt-complexity}
    Let $f: \mathcal{P}_K \rightarrow \mathcal{P}_K$ be a LM and $t \in \mathcal{P}_K$. Let $d: \mathcal{P}_K \times \mathcal{P}_K \rightarrow \mathbb{R}_{\geq 0}$ be a distance function and $\varepsilon > 0$. Define $\Psi^{\varepsilon,d}_{f}: \mathcal{P}_K \rightarrow \mathbb{N}\cup\{\infty\}$ as: 
    \begin{equation}
        \Psi^{\varepsilon,d}_f(t) := \min_{p\in \mathcal{P}_K}\{|p|~: d(f_{\tau = 0}(p),  t) < \varepsilon\}
    \end{equation}
    If there is no such $p \in \mathcal{P}_K$, then $\Psi^{\varepsilon,d}_f(t)=\infty$. Mirroring the notation in Definition~\ref{def:prompt-complexity}, we write \((p^* \overset{\varepsilon,d}{\rightarrowtail}_f t)\), or simply \((p^* \overset{\varepsilon,d}{\rightarrowtail} t)\) when \(f\) is fixed, to mean that $p^*$ is a shortest prompt such that $d(f_{\tau = 0}(p^*),  t) < \varepsilon$.
\end{definition}

\begin{proposition}
    The Soft Prompting Complexity is a relaxed form of Prompting Complexity. In particular, for any distance function $d$, $t\in\mathcal{P}_K$, and $\varepsilon>0$:
    \begin{align}
        \Psi_f^{\varepsilon,d}(t) \leq \Psi_f(t)
    \end{align}
\end{proposition}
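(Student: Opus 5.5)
The plan is to show that the feasible set in the definition of $\Psi_f(t)$ sits inside the feasible set in the definition of $\Psi_f^{\varepsilon,d}(t)$, so the minimum over the larger set cannot be larger. I will split into two cases according to whether $\Psi_f(t)$ is finite.

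If $\Psi_f(t)=\infty$, the inequality $\Psi_f^{\varepsilon,d}(t)\leq\infty$ holds trivially in $\mathbb{N}\cup\{\infty\}$, whatever $d$ and $\varepsilon$ are. If $\Psi_f(t)<\infty$, the exact preimage $S_t^f$ is non-empty, and by the argument preceding Definition~\ref{def:prompt-complexity} there is a shortest prompt $p^*\rightarrowtail t$ with $p^*\in\mathcal{P}_K$, $f_{\tau=0}(p^*)=t$ and $|p^*|=\Psi_f(t)$. Then $d(f_{\tau=0}(p^*),t)=d(t,t)$. Since $d$ is a distance function, $d(t,t)=0$, and because $\varepsilon>0$ we get $d(f_{\tau=0}(p^*),t)<\varepsilon$. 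Hence $p^*$ belongs to the set minimized in Definition~\ref{def:epsilon-prompt-complexity}. That set is therefore non-empty, its minimum is attained because $\mathcal{P}_K$ is finite, and $\Psi_f^{\varepsilon,d}(t)\leq|p^*|=\Psi_f(t)$.

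More compactly, $\{p: f_{\tau=0}(p)=t\}\subseteq\{p: d(f_{\tau=0}(p),t)<\varepsilon\}$, and taking minima of lengths over nested sets reverses the inclusion into an inequality. This gives the claimed relaxation.

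The only delicate step is the identity $d(t,t)=0$. The statement says ``for any distance function $d$'', but Definition~\ref{def:epsilon-prompt-complexity} only requires $d:\mathcal{P}_K\times\mathcal{P}_K\to\mathbb{R}_{\geq0}$. I will make explicit that ``distance'' includes the axiom $d(x,x)=0$, which holds for edit distance and for semantic dissimilarities of the form $1-\cos$. Without that axiom the claim can fail; for example, a constant $d\equiv\varepsilon$ makes $\Psi_f^{\varepsilon,d}\equiv\infty$. In fact the weaker condition $d(t,t)<\varepsilon$ is all the argument uses, and I would note this as a remark.
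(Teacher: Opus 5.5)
Your proof is correct and follows the paper's argument: the same case split on whether $\Psi_f(t)$ is finite, then the observation that a shortest exact prompt $p^*$ satisfies $d(f_{\tau=0}(p^*),t)=d(t,t)=0<\varepsilon$ and is therefore feasible for the relaxed problem. Your remark that only $d(t,t)<\varepsilon$ is used, and that the claim fails for a degenerate choice such as $d\equiv\varepsilon$, spells out an assumption the paper uses only implicitly when it writes ``since $d$ is a distance''.
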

\begin{proof}
    If there is no prompt $p$ such that $f_{\tau=0}(p) = t$, by Definition \ref{def:prompt-complexity} we have that $\Psi_f(t) = \infty$, so the inequality is immediate. Assume instead that $p^*\rightarrowtail t$. Since $d$ is a distance, $d(f_{\tau=0}(p^*), t)=0<\varepsilon$, so $p^*$ is feasible for $\Psi_f^{\varepsilon,d}(t)$, but not necessarily of minimum length among all prompts satisfying the relaxed condition. Thus $\Psi_f^{\varepsilon,d}(t) \leq |p^*| = \Psi_f(t)$ for all $\varepsilon > 0$.
\end{proof}

The function $\Psi^{\varepsilon,d}_f(t)$ is defined with respect to a distance measure and threshold $\varepsilon$ and generalizes prompting complexity to a lossy setting, analogously to rate–distortion in information theory \cite{shannon1948info}. This naturally gives rise to a (gradient-free) optimization algorithm which can be used to approximate a relaxed shortest prompt $p^*\overset{\varepsilon,d}{\rightarrowtail} t$ \cite{chouayfati2025gendln,khattab2024dspy,yuksekgonul2025optimizing}. This relaxed notion of prompting complexity captures the trade‑off between prompt brevity and output fidelity, offering a principled way to design "good enough" prompts that could be shorter than those needed for perfect reproduction.

Since $\mathcal{P}_K$ is finite, this relaxation also recovers ordinary prompting complexity in the zero-distortion limit. If $d$ is a metric, then for sufficiently small $\varepsilon$ the only feasible outputs are exact copies of $t$; hence $\lim_{\varepsilon\to0^+}\Psi_f^{\varepsilon,d}(t)=\Psi_f(t)$, with both sides interpreted in $\mathbb{N}\cup\{\infty\}$.

Having defined the more tractable quantity of Soft Prompting Complexity, we can then define a related way to compare texts. A long-standing difficulty in text similarity is computing a semantic distance between long texts, where practical systems often resort to comparing chunks, summaries, or external embeddings. Here, we define the distance between two texts as the distance between their \textit{causes}: the shortest prompts that approximately generate them under a fixed LM.

\subsection{Prompting Distance}
\label{sec:prompting-distance}
\begin{definition}[Prompting Distance]
    \label{def:prompting-distance}
    The prompting distance of two texts is the distance between the shortest prompts that approximately generated them.

    Formally, fix $\varepsilon>0$ and two distances between texts $d, d': \mathcal{P}_K \times \mathcal{P}_K \rightarrow \mathbb{R}_{\geq 0}$. 
    For any $t_1,t_2\in\mathcal{P}_K$ with finite $\Psi_f^{\varepsilon,d'}(t_1)$ and $\Psi_f^{\varepsilon,d'}(t_2)$, having $(p_1^* \overset{\varepsilon,d'}{\rightarrowtail} t_1)$ and $(p_2^* \overset{\varepsilon,d'}{\rightarrowtail} t_2)$, we define the Prompting Distance $d_{\Psi}$ between $t_1$ and $t_2$ as:
    \begin{align}
         d_{\Psi}(t_1,t_2) = d(p^*_1,p^*_2).
    \end{align}
\end{definition}

This construction separates similarity of effects from similarity of causes. Two texts can be close in ordinary text space while having very different shortest prompts. For example, if LM generations exhibit ergodic behavior on long sequences, they may converge toward similar outputs despite very different initial prompts \cite{zekri2024large}. Conversely, two outputs can be far apart in surface or semantic space while their shortest prompts differ only by a small edit, such as changing the requested topic, stance, or target object. Thus $d_{\Psi}$ captures similarity of the generating conditions, not only similarity of the generated artifacts.

This is useful for prompt optimization because the search is performed in prompt space while the objective is measured in output space. Prompt optimizers such as those cited above implicitly try to infer which prompt edits caused useful output changes. The prompting distance makes this dependence explicit: prompts that are near under $d$ but lead to distant outputs indicate sensitive directions in prompt space, while distant prompts that lead to similar outputs indicate redundant or convergent prompting strategies.

The prompting distance is similar to "description similarity" \cite{ravfogel2023description}, but differs as it considers the generating prompts for texts and not descriptions of texts, which can be regarded as summaries. It is also related to the trajectory-based view of meaning representations in autoregressive models \cite{liu2023meaning}. Their perspective uses model trajectories to study representations associated with meaning; here, by contrast, we compare texts by first inverting them to short generating prompts and then comparing those prompts.

\section{The Prompt Engineering Problem: Behavioral Prompting Complexity}
\label{sec:behavioral-complexity}
Exact prompting complexity starts from a single target text: how short can a
prompt \(p\rightarrowtail t\) be if it has to make the model output exactly
\(t\)? The \((\varepsilon,d)\)-version replaces this with
\(p\overset{\varepsilon,d}{\rightarrowtail}t\), allowing outputs that are close
to \(t\). Many practical questions are still broader than this. We often do not
care which exact string the model produces, as long as the output has some
property. 

The useful object is therefore not a single text, but a set of acceptable
texts. Here, we consider the use of an LM-as-a-Judge \cite{gu2026survey} to determine whether an output satisfies a given behavioral specification.

\begin{definition}[Behavioral Specification]
    \label{def:behavioral-specification}
    A behavioral specification induced by a judge prompt \(j\) is a subset
    \(B_j\subseteq\mathcal{P}_K\) of plausible outputs accepted by an LM
    judge with prompt \(j\). We write \(u\models B_j\) when \(u\in B_j\).
    
    A judge is a function
    \(J_j:\mathcal{P}_K\rightarrow\{0,1\}\) that
    returns \(1\) when a candidate output satisfies the desired behavior
    specified in the prompt \(j\).
    \begin{align}
        B_j
        :=
        \{u\in\mathcal{P}_K: j\mathbin\Vert u\in\mathcal{P}_K,\ J_j(j\mathbin\Vert u)=1\}.
    \end{align}
Thus \(B_j\) is the set of outputs that pass the judge's evaluation. The
judge prompt \(j\) can be thought of as a description of the desired behavior,
and the judge function \(J_j\) evaluates whether a given output meets that
behavior.
\end{definition}

\begin{definition}[Behavioral Prompting Complexity]
    \label{def:behavioral-prompting-complexity}
    Let \(f:\mathcal{P}_K\rightarrow\mathcal{P}_K\) be an LM and let
    \(B\subseteq\mathcal{P}_K\) be a behavioral specification. The behavioral
    prompting complexity of \(B\) is the length of the shortest plausible prompt
    whose deterministic output satisfies \(B\):
    \begin{align}
        \Psi_f(B)
        :=
        \min_{p\in\mathcal{P}_K}
        \{|p|: f_{\tau=0}(p)\models B\}.
    \end{align}
    If no plausible prompt reaches \(B\), then \(\Psi_f(B)=\infty\).
\end{definition}

Intuitively, \(\Psi_f(B)\) asks how hard it is to land anywhere inside the
accepted region \(B\). If
\[
    \operatorname{Pre}_f(B)
    =
    \{p\in\mathcal{P}_K:f_{\tau=0}(p)\in B\}
\]
is the set of plausible prompts that reach \(B\), then \(\Psi_f(B)\) is just the
length of the shortest prompt in this preimage. This is a controllability
measure for a behavior class, rather than a compressibility measure for one
specified output.

\begin{proposition}
    Behavioral prompting complexity generalizes both prompting complexity and
    \((\varepsilon,d)\)-prompting complexity. In particular:
    \begin{align}
        \Psi_f(\{t\}) &= \Psi_f(t),\\
        \Psi_f(A_{\varepsilon,d}(t)) &= \Psi_f^{\varepsilon,d}(t),
    \end{align}
    where
    \(A_{\varepsilon,d}(t)=\{u\in\mathcal{P}_K:d(u,t)<\varepsilon\}\).
    Moreover,
    \begin{align}
        \Psi_f(B)=\min_{u\in B}\Psi_f(u),
    \end{align}
    with the convention that the minimum over the empty set is \(\infty\).
\end{proposition}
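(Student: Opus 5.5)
The plan is to work entirely with the preimage description $\operatorname{Pre}_f(B)=\{p\in\mathcal{P}_K:f_{\tau=0}(p)\in B\}$. By Definition~\ref{def:behavioral-prompting-complexity}, $\Psi_f(B)=\min\{|p|:p\in\operatorname{Pre}_f(B)\}$, with value $\infty$ when the preimage is empty. I will prove the general identity $\Psi_f(B)=\min_{u\in B}\Psi_f(u)$ first, and then read off the two special cases by choosing $B$ appropriately.

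The key step is the decomposition
\begin{align}
    \operatorname{Pre}_f(B)=\bigcup_{u\in B} S_u^f,
\end{align}
where $S_u^f=\{p\in\mathcal{P}_K:f_{\tau=0}(p)=u\}$ is the exact visible preimage from Section~\ref{sec:prompting-complexity}. This holds because $f_{\tau=0}$ is a function: a prompt $p$ lands in $B$ exactly when $f_{\tau=0}(p)=u$ for some $u\in B$. Moreover, the $S_u^f$ are pairwise disjoint, although disjointness is not actually needed.

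Taking the minimum of lengths over a union gives the minimum over the pieces of the minimum lengths:
\begin{align}
    \min\{|p|:p\in\textstyle\bigcup_{u\in B}S_u^f\}=\min_{u\in B}\min\{|p|:p\in S_u^f\}=\min_{u\in B}\Psi_f(u).
\end{align}
Here I need to check the $\infty$ conventions carefully.
\begin{itemize}
    \item If every $S_u^f$ is empty, or if $B=\emptyset$, both sides equal $\infty$.
    \item Otherwise, both minima are over finite nonempty sets of natural numbers, because $\mathcal{P}_K$ is finite. They are therefore attained, and the two inequalities $\leq$ and $\geq$ follow from comparing a minimizing prompt on each side.
\end{itemize}
For the $\leq$ direction, note that a shortest prompt for any $u\in B$ lies in $\operatorname{Pre}_f(B)$. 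For the $\geq$ direction, a shortest $p\in\operatorname{Pre}_f(B)$ lies in $S_{f(p)}^f$ with $f(p)\in B$, so $\Psi_f(f(p))\leq|p|$.

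The special cases then follow directly.
\begin{itemize}
    \item Taking $B=\{t\}$ collapses the minimum to $\Psi_f(t)$.
    \item Taking $B=A_{\varepsilon,d}(t)$, the condition $f_{\tau=0}(p)\in A_{\varepsilon,d}(t)$ is literally $d(f_{\tau=0}(p),t)<\varepsilon$. This works because $f$ maps into $\mathcal{P}_K$, so membership in $\mathcal{P}_K$ is automatic. The feasible prompt sets in Definition~\ref{def:behavioral-prompting-complexity} and Definition~\ref{def:epsilon-prompt-complexity} therefore coincide, and so do their minima.
\end{itemize}

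The main obstacle is not mathematical depth but bookkeeping. One issue is ensuring that $\models B$ is read as plain membership $u\in B$, as in Definition~\ref{def:behavioral-specification}, so that arbitrary $B\subseteq\mathcal{P}_K$ is allowed rather than only judge-induced sets. The other is handling the empty-set and $\infty$ conventions consistently in all three identities.
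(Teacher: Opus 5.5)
Your proof is correct and follows essentially the same argument as the paper. The paper also matches the feasible-prompt sets directly for the singleton and ball cases, and it obtains $\Psi_f(B)=\min_{u\in B}\Psi_f(u)$ from the observation that every prompt reaching $B$ produces some $u\in B$; you make that step precise via $\operatorname{Pre}_f(B)=\bigcup_{u\in B}S_u^f$ and handle the empty-set and $\infty$ conventions explicitly, which the paper leaves implicit.
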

\begin{proof}
    A singleton behavior \(B=\{t\}\) asks for exactly the prompts that produce
    \(t\), whose minimizers satisfy \(p\rightarrowtail t\), so it recovers \(\Psi_f(t)\). If \(B\) is the ball
    \(A_{\varepsilon,d}(t)\), then reaching \(B\) means producing an output
    within distance \(\varepsilon\) of \(t\), which is exactly Definition
    \ref{def:epsilon-prompt-complexity}.

    For a general \(B\), every successful prompt produces some accepted output
    \(u\in B\). The shortest way to reach \(B\) is therefore the shortest among
    all the shortest ways to produce one of its members:
    \[
        \Psi_f(B)=\min_{u\in B}\Psi_f(u).
    \]
\end{proof}

The next property says that making the accepted set larger cannot make
the task harder. If more outputs count as success, the shortest successful
prompt can only get shorter, or stay the same.

\begin{proposition}[Monotonicity]
    Having a larger behavioral specification cannot increase the behavioral prompting complexity. In particular, if \(B_1\subseteq B_2\subseteq\mathcal{P}_K\), then $\Psi_f(B_2)\leq \Psi_f(B_1)$.
\end{proposition}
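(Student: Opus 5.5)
The plan is to compare preimages directly, since $\Psi_f(B)$ is by definition the minimum prompt length over $\operatorname{Pre}_f(B)=\{p\in\mathcal{P}_K:f_{\tau=0}(p)\in B\}$. The first step is to show that $B_1\subseteq B_2$ implies $\operatorname{Pre}_f(B_1)\subseteq\operatorname{Pre}_f(B_2)$. Take any $p\in\operatorname{Pre}_f(B_1)$. Then $f_{\tau=0}(p)\in B_1\subseteq B_2$, so $p\in\operatorname{Pre}_f(B_2)$. Nothing about the model is used beyond the fact that $f_{\tau=0}$ is a well-defined function on $\mathcal{P}_K$.

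The second step is to pass to minima of lengths, splitting into two cases.
\begin{itemize}
\item If $\operatorname{Pre}_f(B_1)=\emptyset$, then $\Psi_f(B_1)=\infty$ by Definition~\ref{def:behavioral-prompting-complexity}. The inequality $\Psi_f(B_2)\leq\infty$ holds trivially in $\mathbb{N}\cup\{\infty\}$.
\item Otherwise $\operatorname{Pre}_f(B_1)$ is a non-empty finite subset of the finite set $\mathcal{P}_K$, so it contains a prompt $p_1$ with $|p_1|=\Psi_f(B_1)$, exactly as in the existence argument for shortest prompts given before Definition~\ref{def:prompt-complexity}. By the first step, $p_1$ is feasible for $B_2$, so $\Psi_f(B_2)\leq|p_1|=\Psi_f(B_1)$.
\end{itemize}

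As a cross-check, one can use the preceding proposition, $\Psi_f(B)=\min_{u\in B}\Psi_f(u)$. A minimum over the larger index set $B_2$ is at most the minimum over the subset $B_1$, with the empty-set convention $\min\emptyset=\infty$ covering the degenerate case. I would mention this as an alternative one-line argument.

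There is no real obstacle here. The only point needing care is the bookkeeping with $\infty$ and the empty preimage, and the case split above handles it.
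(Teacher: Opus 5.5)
Your proof is correct and follows the same route as the paper: you show $\operatorname{Pre}_f(B_1)\subseteq\operatorname{Pre}_f(B_2)$, so the feasible prompt set for $B_1$ sits inside the one for $B_2$, and minimizing length over a larger set cannot give a larger value. Your explicit case split for an empty preimage (where $\Psi_f(B_1)=\infty$) only spells out the $\infty$ bookkeeping that the paper leaves implicit.
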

\begin{proof}
    Every prompt that reaches \(B_1\) also reaches \(B_2\). Thus the set of
    feasible prompts for \(B_1\) is contained in the feasible set for \(B_2\),
    and minimizing prompt length over the larger set cannot give a larger
    value. 
\end{proof}

The judge prompt \(j\) affects behavioral prompting complexity only through
the set \(B_j\) that it defines. Thus \(j\) is a description of an accepted
region in output space, while \(\Psi_f(B_j)\) measures how hard it is for the
generator \(f\) to reach that region. In general, there is no monotone
relationship between \(|j|\) and \(\Psi_f(B_j)\): a short judge prompt can
define an unreachable behavior, while a long rubric can define an easy one.
For example, a short specification such as ``solve this open problem'' may
describe a very small or unreachable set of acceptable answers. Conversely, a
long grading rubric may accept many ordinary outputs, making the corresponding
behavior easy to reach.

\begin{definition}[Conditional Behavioral Prompting Complexity]
    Let \(B_j\subseteq\mathcal{P}_K\) be the behavioral specification induced
    by the judge prompt \(j\). The conditional behavioral prompting complexity
    of \(B_j\) given \(j\) is
    \begin{align}
        \Psi_f(B_j\mid j)
        :=
        \min_{p\in\mathcal{P}_K}
        \{|p|:j\mathbin\Vert p\in\mathcal{P}_K,\ f_{\tau=0}(j\mathbin\Vert p)\models B_j\}.
    \end{align}
    If no such prompt exists, then \(\Psi_f(B_j\mid j)=\infty\).
\end{definition}

This conditional quantity separates the cost of stating the behavioral
specification from the cost of satisfying it. The judge prompt \(j\) is treated
as already present in the context, and \(\Psi_f(B_j\mid j)\) measures the
additional prompt length needed to make the generator produce an output that
would pass the judge. In prompt-engineering terms, \(j\) is the task or rubric,
while \(p\) is the remaining instruction that tries to make the model comply
with that rubric.

\begin{proposition}[Specification-conditioning bound]
    For any behavioral specification \(B_j\) induced by a judge prompt \(j\),
    \begin{align}
        \Psi_f(B_j)
        \leq
        |j|+\Psi_f(B_j\mid j)+O(1),
    \end{align}
    whenever the right-hand side is finite.
\end{proposition}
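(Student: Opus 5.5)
The plan is to exhibit a single feasible prompt for the unconditional problem whose length matches the right-hand side. Assume $\Psi_f(B_j\mid j)<\infty$, and let $p$ attain it. By the definition of conditional behavioral prompting complexity, $p$ satisfies $j\mathbin\Vert p\in\mathcal{P}_K$ and $f_{\tau=0}(j\mathbin\Vert p)\models B_j$. The candidate unconditional prompt is the concatenated text $q:=j\mathbin\Vert p$.

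First, $q$ is admissible in Definition~\ref{def:behavioral-prompting-complexity}. It is a member of $\mathcal{P}_K$, and its deterministic output lies in $B_j$. Hence $\Psi_f(B_j)\leq|q|$, since the minimum is taken over a set containing $q$. This step uses only the definitions, exactly as in the Monotonicity proposition, where feasibility of one prompt bounds the minimum.

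Second, bound $|q|$. The concatenation $\mathbin\Vert$ is defined to allow delimiter special tokens inserted by the tokenizer template. These contribute a fixed number of tokens that depends only on the template, not on $j$ or $p$. Therefore $|q|\leq|j|+|p|+O(1)=|j|+\Psi_f(B_j\mid j)+O(1)$, which is the claimed inequality.

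The main obstacle is a subtlety in how ``$v$ prepended to its context'' is read. If the conditional version places $j$ in a distinct role, such as a system slot or a prefix outside the user message, then $f_{\tau=0}(j\mathbin\Vert p)$ must literally equal the output of $f$ on the single unconditional prompt $q$. Otherwise the prompt $q$ is not guaranteed to reproduce the same output. I would resolve this by fixing the convention used throughout the paper: $f$ acts on the concatenated sequence $j\mathbin\Vert p$ as its input. Under that convention, the conditional feasibility condition is literally a statement about $f$ applied to a single plausible prompt, so the identification is exact. Any role-marking delimiter needed to splice $j$ into the user prompt is absorbed into the $O(1)$ term. The plausibility requirement $j\mathbin\Vert p\in\mathcal{P}_K$ was built into the conditional definition precisely so that $q$ is a legal plausible prompt, and so no further argument is needed there.
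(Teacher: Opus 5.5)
Your proof is correct and matches the paper's argument: take a minimizer $p^*$ of the conditional problem, observe that $j\mathbin\Vert p^*$ is feasible for $\Psi_f(B_j)$ directly from the conditional definition, and bound its length by $|j|+|p^*|+O(1)$, with the constant absorbing template delimiters. Your remark that $f$ must be read as acting on the concatenated sequence $j\mathbin\Vert p$ as a single input is a sensible explicit statement of a convention the paper leaves implicit.
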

\begin{proof}
    Let \(p^*\) be a shortest prompt for \(\Psi_f(B_j\mid j)\). By
    definition, \(j\mathbin\Vert p^*\in\mathcal{P}_K\) and
    \(f_{\tau=0}(j\mathbin\Vert p^*)\models B_j\). Hence
    \(j\mathbin\Vert p^*\) is a feasible prompt for \(\Psi_f(B_j)\). Its
    length is \(|j|+|p^*|+O(1)\), where the constant accounts for fixed
    delimiter or template tokens. Therefore
    \(\Psi_f(B_j)\leq |j|+\Psi_f(B_j\mid j)+O(1)\).
\end{proof}

The bound says that an unconditional prompt can always spend tokens to include
the specification \(j\), and then append whatever conditional prompt would work
once \(j\) is in context. It is only an upper bound: the shortest prompt for
reaching \(B_j\) need not literally contain the judge prompt. The model may
know a much shorter way to reach the same accepted region.

\begin{proposition}[Direct specification bound]
    Suppose there is a fixed instruction \(a\in\mathcal{P}_K\), independent of
    \(j\), such that \(j\mathbin\Vert a\in\mathcal{P}_K\) and
    \(f_{\tau=0}(j\mathbin\Vert a)\models B_j\) for every judge prompt \(j\)
    in a class of interest. Then, for every such \(j\),
    \begin{align}
        \Psi_f(B_j)\leq |j|+O(1).
    \end{align}
\end{proposition}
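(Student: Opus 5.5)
The plan is to exhibit a single feasible prompt for \(\Psi_f(B_j)\) and bound its length, exactly as in the proof of the Specification-conditioning bound. Fix a judge prompt \(j\) in the class of interest and take the candidate prompt \(q_j := j\mathbin\Vert a\). By hypothesis, \(q_j\in\mathcal{P}_K\) and \(f_{\tau=0}(q_j)\models B_j\). So \(q_j\) lies in the preimage \(\operatorname{Pre}_f(B_j)\), and Definition~\ref{def:behavioral-prompting-complexity} gives \(\Psi_f(B_j)\leq |q_j|\).

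The remaining step is the length count. Concatenation with delimiters gives \(|q_j| = |j| + |a| + \delta\). Here \(\delta\) is the number of fixed delimiter or template tokens inserted by \(\mathbin\Vert\), and this is the same convention the paper uses in the Specification-conditioning bound. The instruction \(a\) is fixed in advance and independent of \(j\), so \(|a|\) is a constant. The template is fixed, so \(\delta\) is also a constant. Hence \(|a|+\delta = O(1)\), with the constant depending only on \(a\), the tokenizer template and \(f\), and not on \(j\). This yields \(\Psi_f(B_j)\leq |j|+O(1)\) uniformly over the class.

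An alternative route goes through the Specification-conditioning bound. Here \(a\) is a feasible prompt for the conditional quantity, so \(\Psi_f(B_j\mid j)\leq |a|\), and substituting gives \(\Psi_f(B_j)\leq |j|+|a|+O(1)\). I would mention this route as a remark but give the direct argument, since it avoids a redundant \(O(1)\).

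The only delicate point is the uniformity of the constant. One must check that \(\delta\) does not depend on \(j\), for instance that the template does not insert a length-dependent number of tokens. I would state this explicitly as part of the fixed-template convention. I would also note that plausibility of \(j\mathbin\Vert a\) is part of the hypothesis, so no separate argument is needed.
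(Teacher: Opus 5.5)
Your proposal is correct and follows the paper's proof: the paper also notes that $j\mathbin\Vert a$ is feasible for $\Psi_f(B_j)$ by hypothesis, and absorbs $|a|$ and the fixed delimiter overhead into the $O(1)$ term. Your remark that the template overhead must not depend on $j$ usefully makes explicit a uniformity the paper leaves implicit.
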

\begin{proof}
    The concatenated prompt \(j\mathbin\Vert a\) is feasible for
    \(\Psi_f(B_j)\) by assumption. Since \(a\) is fixed, its length and the
    delimiter overhead are absorbed into the \(O(1)\) term.
\end{proof}

This special case captures the idealized situation in which the model can
reliably turn a behavioral specification into a satisfying output using a fixed
generic instruction such as ``produce an answer satisfying the above
criterion.'' Then the cost of controlling the behavior is at most the cost of
writing down the specification, up to fixed template overhead. In practice this
assumption is strong: failures of instruction following, ambiguity in \(j\), or
an empty accepted set \(B_j\) can all break the bound's premise.

\section{Discussion and Related Work}
\label{sec:related}
\subsection{Prompt Search and Optimization}

Prompt engineering is usually presented as the problem of finding an input that
causes a model to behave well on a task. Early prompting work showed that large
language models can be controlled by natural-language instructions, few-shot
examples, and task framing rather than by weight updates
\cite{brown2020language,reynolds2021prompt,schulhoff2024prompt}. Chain-of-thought
prompting further demonstrated that the prompt may specify not only the desired
answer but also the form of intermediate computation to be performed
\cite{wei2022chain}. In this literature, the prompt is an interface for
controlling a model; in our formalism, it is also a candidate compressed
description of the output.

Automatic prompt optimization makes this search explicit. AutoPrompt searches
for trigger tokens that elicit knowledge from masked language models
\cite{shin2020autoprompt}; APO uses natural-language critiques as a surrogate
for gradients and combines prompt edits with beam search
\cite{pryzant2023automatic}; DSPy treats language-model calls as optimizable
modules in a larger program \cite{khattab2024dspy}; and recent evolutionary or
feedback-based approaches optimize prompt-like text through black-box
evaluation or language-model feedback
\cite{chouayfati2025gendln,yuksekgonul2025optimizing}. These methods estimate
useful prompts under a task metric. The quantity
$\Psi_f^{\varepsilon,d}(t)$ gives a complementary idealization: among all
plausible prompts that produce an acceptable output, it asks for the shortest
one. Practical prompt optimizers can therefore be viewed as algorithms for
finding upper bounds on relaxed prompting complexity.

This also distinguishes our setting from analyses of prompt-space complexity
\cite{zhang2025prompt}. Work in that direction studies how the size and
structure of the prompt space affects reasoning performance, especially for
chain-of-thought prompting. We instead assign a complexity to a target text
relative to a fixed model and decoding rule. The two views meet in the search
problem: a difficult prompt space makes it hard to find a relaxed shortest
prompt $p^*\overset{\varepsilon,d}{\rightarrowtail}t$, even when such a prompt
exists.

\subsection{Prompt Inversion}

Prompting complexity turns prompt engineering around. The forward problem is:
given a prompt, evaluate the output. The inverse problem is: given an output or
behavior, find a short plausible prompt that caused it. Hu et al.
\cite{hu2023amortizing} study intractable posterior inference in language
models, including latent-variable views of chain-of-thought reasoning. In our
case the latent variable is the prompt itself: for a target text $t$, the
preimage set $\{p\in\mathcal{P}_K:f_{\tau=0}(p)=t\}$ is the object being
searched, and its shortest elements satisfy $p\rightarrowtail t$.

This should not be confused with model stealing or with inversion of hidden
states. Recent results on injectivity argue that transformer language models
can preserve enough information in continuous hidden representations to recover
discrete inputs \cite{nikolaou2025language}. Surjectivity results ask whether
neural networks can in principle realize arbitrary outputs from some input
\cite{jiang2025surjectivity}. Prompting complexity is defined at the visible
text interface after templating, decoding, and detokenization. At that
interface many prompts may lead to the same text, and many texts may be
unreachable by any plausible prompt. The relevant safety and usability question
is therefore not only whether an inverse exists somewhere in token space, but
whether a short human-plausible inverse exists.

Although $\Psi_f$ is computable by exhaustive enumeration in the finite setting,
the search is intractable in practice. This gives the quantity a flavor closer
to resource-bounded and instance-level complexity than to a usable compression
algorithm \cite{orponen1994instance}. A target text may be easy to verify once a
candidate prompt is found, yet hard to invert by search. In this weak sense,
some outputs behave like model-relative hashes of their prompts: the forward
map is cheap to evaluate, while recovering a short cause may require searching
an exponentially large plausible prompt space. This analogy is only heuristic,
since we make no cryptographic assumption about $f$.

\subsection{Compression, Context, and Tokenization}

Our notion of compression differs from ordinary language-model compression.
Predictive language models can be turned into strong lossless compressors
\cite{llm2024compression}, and classical compression studies the number of bits
needed to encode a string given an encoder and decoder. Prompting complexity
instead asks how many prompt tokens are needed to make a fixed model regenerate
or approximate a text. The decoder is not a general-purpose decompressor shared
by all users; it is the particular LM, with its training distribution,
tokenizer, instruction tuning, system prompt, and decoding rule already baked
in. This is why the failure of an invariance theorem is not a defect but a
feature of the definition.

The distinction also separates our framework from context compression. Context
pruning and related methods remove uninformative tokens from an existing
context to reduce inference cost \cite{anagnostidis2023dynamic}. Prompting
complexity asks for a new cause of the output, not a shorter subset of the
original context. Similarly, model representations and routing work can build
compact descriptors of language models themselves \cite{zhuang2024embedllm,
jitkrittum2025universal}, while tokenizer research studies how raw strings are
mapped into token sequences and how those choices affect efficiency and
statistical consistency \cite{gastaldi2024foundations,minixhofer2024zero}.
Those factors matter because $\Psi_f$ is measured in tokens and is therefore
tokenizer-dependent, but the object being minimized is still a plausible prompt.

There is a useful analogy to grammar-based compression. The smallest grammar
problem asks for the smallest grammar that generates a given string
\cite{charikar2005smallest}. A shortest prompt is likewise a compact
generative description, but with two important differences: the "grammar" is
the learned model plus decoding rule, and the description must be a plausible
text rather than an arbitrary formal object. This plausibility constraint is
what makes the quantity relevant to prompt engineering rather than merely to
coding theory.

\subsection{Synthetic Data and Reverse Instructions}

The framework clarifies a common ambiguity around synthetic data. If a dataset
$D$ is generated by a fixed model from prompts $p_1,\ldots,p_n$, then the
synthetic corpus does not contain arbitrary new information relative to that
model; much of it is already compressed by the prompts, the sampling rule, and
the model weights. In this sense, a large synthetic dataset can have a short
model-relative description even when its surface form is long. This does not
make synthetic data useless. It can reweight modes of the model distribution,
make implicit capabilities explicit, expose rare combinations of skills, or
provide training examples in a format that another model can consume. But its
information content should be discussed relative to the generator and the
selection process, not just by counting output tokens.

Reverse-instruction methods are a concrete example. LongForm constructs
instruction-tuning data by taking human-written documents and generating
instructions that could have produced them \cite{koksal2023longform}. This is
an approximate inversion step: the document is treated as the target output,
and a model proposes a shorter natural-language cause. The resulting
instruction-output pair can then train another model to make long outputs more
accessible from short prompts. From the perspective of prompting complexity,
such data may reduce $\Psi_g^{\varepsilon,d}(t)$ for a downstream model $g$,
even if it does not create new information ex nihilo.

In-context learning provides another route by which prompts change the effective
behavior of a model without changing its stored weights. Few-shot prompting
\cite{brown2020language} and recent analyses of implicit in-context dynamics
\cite{dherin2025learningtrainingimplicitdynamics} suggest that examples in the
prompt can act like temporary task-specific updates. Conditional prompting
complexity $\Psi_f(u\mid v)$ captures this effect at the text interface: the
same target $u$ may require a long prompt in isolation but a much shorter prompt
when examples, demonstrations, or a domain context $v$ are already present.

\subsection{Reasoning Traces and Output Length}

Reasoning models complicate the accounting because the prompt may elicit a long
internal or visible trace before the final answer. Chain-of-thought prompting
\cite{wei2022chain}, DeepSeek-R1 \cite{guo2025deepseek}, and the OpenAI o1
system card \cite{jaech2024openai} all point to a regime where useful behavior
depends on extra inference-time computation. In this paper, the primary object
whose complexity is measured is the final output text, while any reasoning
trace is treated as part of the model's bounded computation unless it is made
part of the target. A natural extension would add a cost term for reasoning
tokens or wall-clock inference, giving a speed-prior analogue \cite{schmidhuber2000algorithmic} 
of prompting complexity.

Even if two prompts have the same length and produce
equally good final answers, one may require a much longer generation. Work on
forecasting output length \cite{piotrowski2025will} is therefore relevant to
practical prompt search: estimating whether a candidate prompt is cheap enough
to evaluate can matter as much as estimating whether it will succeed.

\subsection{Safety and Jailbreaks}

The same language gives a compact formulation of one part of AI safety. Let
$H\subseteq\mathcal{P}_K$ be a set of harmful or otherwise undesirable texts.
A model is safer, with respect to this interface, when every $t\in H$ has large
or infinite relaxed prompting complexity: no short plausible prompt should
produce an unacceptable output. This differs from making harmful continuations
low probability under ordinary sampling. The relevant adversary is not sampling
randomly from the model, but searching for a prompt in the preimage of $H$.

The plausibility restriction is important in this context. Some jailbreaks and glitch prompts
are visibly unnatural token artifacts \cite{rumbelow2023solidgoldmagikarp,
plinius}; these may be excluded by a sufficiently strict plausibility filter.
Other attacks are ordinary-looking instructions, role plays, or contextual
setups. Such prompts remain inside $\mathcal{P}_K$, so perplexity or
token-level plausibility filters are insufficient. Alignment and instruction
following \cite{ouyang2022traininglanguagemodelsfollow} can be understood as
reshaping the map $f$ so that harmful texts have smaller preimage mass
$m_f(t)$, longer shortest prompts, or no plausible prompts at all. The
surjectivity perspective \cite{jiang2025surjectivity} suggests why this is
difficult: the goal is not merely to remove all mathematical preimages, but to
make dangerous preimages inaccessible under realistic prompt constraints.

\subsection{Similarity, Meaning, and Causes}

Prompting distance compares texts by comparing short causes rather than only
surface forms or embeddings. This is related to description-based text
similarity \cite{ravfogel2023description}, conceptual similarity through
complexity-constrained descriptions \cite{achille2024interpretable}, and
trajectory-based representations of meaning in autoregressive models
\cite{liu2023meaning}. The difference is that a prompting-distance prompt
$p\overset{\varepsilon,d}{\rightarrowtail}t$ is not merely a description or
summary of the text; it is an input that actually causes the fixed model to
generate an acceptable output.

This cause-based view explains why semantic similarity of outputs need not
imply similarity of prompts. Two paraphrases may be generated from entirely
different instructions, while two very different outputs may differ only by a
small edit in the prompt, such as changing a topic, style, or target entity.
Thus $d_\Psi$ is best viewed as a model-relative geometry of controllability:
it describes which changes in the prompt space are small causes of large or
small changes in output space.

\section{Conclusions}
\label{sec:conclusions}
We introduced prompting complexity as a model-dependent analogue of
algorithmic text complexity for pretrained instruction-tuned language models.
The central question is simple: for a fixed LM $f$, what is the shortest
plausible prompt that causes deterministic decoding to produce a target text
$t$, i.e. a prompt $p\rightarrowtail t$? This changes the usual view of prompting from a craft of phrasing
instructions into a formal search problem over human-interpretable causes.
When a short prompt elicits a long text, the missing information is supplied by
the model's weights, training distribution, tokenizer, prompt template, and
decoding rule.

The resulting quantity $\Psi_f(t)$ inherits part of the intuition of
Kolmogorov complexity while departing from it in the ways that matter for
language models. Since a LM has finite context and always halts,
prompting complexity is computable in principle. Since prompts are restricted
to plausible texts and the model is fixed, it is not universal: there is no
model-independent invariance theorem. A text can be highly compressible for one
model because it is memorized, represented, or easy to elicit, while being
nearly inaccessible to another. Prompting complexity is therefore not an
absolute property of a text, but a property of a text relative to a particular
language-model interface.

We also defined a prompting probability and proved a weak coding theorem that
relates prompt length, prompt preimage mass, and ordinary model probability.
This connects the search for short prompts to the probability mass assigned by
the model to texts and to the multiplicity of prompts that produce the same
output. In the relaxed setting, $(\varepsilon,d)$-prompting complexity captures
the practical prompt-engineering objective: finding a short prompt whose output
is close enough to a desired target under a chosen distance. The idealized
optimum is a relaxed shortest prompt
$p\overset{\varepsilon,d}{\rightarrowtail}t$. This turns prompt
optimization into lossy, model-relative generative compression.

The broader value of the framework is that it gives a common language for
phenomena that are usually discussed separately. Prompt optimization becomes
approximate search for short prompts. Prompt inversion asks for plausible
preimages of observed completions. Synthetic data can be analyzed by the
length of the prompts and generation process that describe it relative to the
generator. Safety can be phrased as making harmful texts prompt-incompressible
or at least hard to reach from short plausible prompts. Text similarity can be
studied through the similarity of causes rather than only the similarity of
surface outputs.

This view suggests that prompts
are model-relative descriptions of possible texts and behaviors. As language
models become more capable, longer-context, and more explicitly equipped with
reasoning traces, understanding which outputs are accessible from which short
plausible prompts becomes increasingly important. Prompting complexity offers a formal vocabulary for that question.

\section*{Limitations}
\label{sec:limitations}
The main limitation is computational. Exact prompting complexity is finite and
computable only because the model, context window, tokenizer, and plausible
text set are finite; exhaustive search is still infeasible. Any empirical
method can at best provide upper bounds by finding a short prompt, lower bounds
by ruling out restricted search regions, or probabilistic estimates through
sampling. 

Finally, low prompting complexity is not the same as low problem complexity. A
math solution, legal argument, or factual report can have a short prompt if the
model already contains the relevant structure, while a trivial-looking output
can be hard to elicit exactly because of decoding quirks or instruction-tuning
priors. Prompting complexity measures accessibility of a text through a
particular model interface. That makes it useful for analyzing prompt
optimization, inversion, synthetic data, safety, and similarity, but it also
marks the boundary of the concept.

\bibliographystyle{plainnat}
\bibliography{refs}

\section*{Appendix}
\label{sec:appendix}
\subsection{LMs are computable and polynomially bounded}

\begin{proposition}
\label{prop:f-computable}
A LM $f$ is computable, and it is polynomially space- and time-bounded by $\mathcal{O}(K^3)$, with $K$ the maximal sequence size for transformer inference. 
\end{proposition}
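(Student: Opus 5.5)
I would prove the statement by modelling one evaluation of $f_{\tau=0}$ as an explicit, terminating procedure and then bounding its cost directly.

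\textbf{Computability.} Since $\mathcal{V}$ is finite and the weights are a fixed finite array of finite-precision numbers, each forward pass of the decoder is a finite composition of arithmetic operations. These are embeddings, matrix products, softmax at fixed precision, layer norms and MLPs, and each is computable. The nucleus set $S_\rho(c)$ and the greedy choice are obtained by sorting $T$ numbers and taking a prefix sum, which is also computable. The decoding loop appends one token per step and stops at \texttt{<eos>} or when the length reaches $K$, so it runs at most $K$ iterations and always halts. A halting composition of computable steps is computable, so $f$ is a total computable function on the finite set $\mathcal{P}_K$.

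\textbf{Time bound.} I would cost a single forward pass on a context of length $n\le K$. Let the number of layers $L$, the hidden width $d$ and the vocabulary size $T$ be model constants independent of $K$. Self-attention costs $O(n^2 d)$ per layer and the MLP and projections cost $O(n d^2)$, so one pass costs $O(L(n^2d+nd^2)+dT)=O(n^2)$ in $K$. Summing over the at most $K$ decoding steps without KV caching gives $\sum_{n\le K}O(n^2)=O(K^3)$. With caching the cost falls to $O(K^2)$, which is still within the claimed bound.

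\textbf{Space bound.} Memory consists of the weights, which are constant; the context of $K$ tokens; the attention matrix, which is $O(K^2)$ if materialized; and the KV cache, which is $O(LKd)=O(K)$. All of this is $O(K^2)\subseteq O(K^3)$. The reasoning trace lives inside the same context window, so it adds nothing beyond $K$ tokens. Templates and delimiters add only constant overhead.

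\textbf{Main obstacle.} The hard part is justifying that treating the model as a black box, as in Definition~\ref{def:pit-llm}, is consistent with this concrete cost accounting. I would handle this by assuming a transformer-decoder backbone with $K$-independent hyperparameters and fixed-precision arithmetic, so that each arithmetic operation takes $O(1)$ bit operations. The earlier section already grants both assumptions. For a general black box, only computability and halting would survive, because both follow from the finite step bound alone; the $O(K^3)$ bound would hold for any architecture whose per-step cost is $O(K^2)$.
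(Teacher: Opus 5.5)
Your proposal is correct and follows essentially the same route as the paper. Computability and halting come from a finite composition of fixed-precision computable operations with at most $K$ decoding steps. The time bound comes from summing an $O(n^2)$ per-step attention cost over the generated positions, which the paper writes as $\sum_{i=0}^{|t|}O((|p|+i)^2)=O((|p|+|t|)^3)$ with $|p|+|t|\le K$. Your explicit per-layer costs, the KV-caching remark, and the tighter $O(K^2)$ space accounting refine the paper's looser ``polynomial, hence $O(K^3)$'' memory claim rather than change the argument.
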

\begin{proof}
    By definition, the LM $f$ is constructed through a composition of computable functions, has fixed-precision parameters and has a fixed context size $K$. Therefore $f$ is computable\footnote{Or simply: since we can run an LM on a computer, it is computable.} and always halts. Since the computational complexity of a single forward pass (i.e., for generating a single token) of a transformer decoder is $\mathcal{O}(n^2)$ \cite{keles2023computational}, due to quadratic complexity of self-attention, generating a text of size $|t|$ from a prompt of size $|p|$ is then:
    \begin{align}
        \sum\limits_{i = 0}^{|t|} \mathcal{O}((|p| + i)^2)
        &= \mathcal{O}\left(|t||p|^2 + |p|(|t|^2 + |t|) + \frac{|t|(|t| + 1)(2|t| + 1)}{6}\right) \\
        &= \mathcal{O}((|p| + |t|)^3).
    \end{align}
    Since the active sequence length is at most $K$, the runtime is bounded by $\mathcal{O}(K^3)$. The memory used by the finite context and attention computation is also polynomial in $K$, and in particular upper-bounded by $\mathcal{O}(K^3)$ under this loose accounting.
\end{proof}

\subsection{Additional propositions on plausible texts}

\begin{proposition}[Number of Plausible Texts]
For fixed $f,\tau,\rho$ and context $c$, let
$\mathcal{P}^{\tau,\rho}_{n}(c)$ be the set of plausible continuations of
length exactly $n\leq K-|c|$. Then
\begin{align}
    |\mathcal{P}^{\tau,\rho}_{n}(c)|\leq \ceil{\rho T}^{\,n}.
\end{align}
Under the effective-branching approximation
$|S_\rho(c\mathbin\Vert t_{<i})|\approx N_{\rho,s}>1$ along plausible prefixes,
the number of plausible texts grows exponentially:
$|\mathcal{P}^{\tau,\rho}_{n}(c)|\approx (N_{\rho,s})^n$.
\end{proposition}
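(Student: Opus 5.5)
The plan is to prove the upper bound by a tree-counting argument, first bounding the size of every nucleus set and then multiplying over positions. The exponential-growth statement is an approximation and is handled last.

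\textbf{Step 1: size of the nucleus set.} Fix any context $u$ and write $N=N_\rho(u)$. The ordered probabilities $P_f^\tau(v_{(i)}\mid u)$ are non-increasing. By minimality of $N$, the top $N-1$ tokens have total mass strictly below $\rho$. Their average mass is therefore below $\rho/(N-1)$. Since the ordering is non-increasing, every remaining token has mass at most that average, and these remaining $T-(N-1)$ tokens carry mass greater than $1-\rho$.

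This gives $(T-N+1)\rho/(N-1) > 1-\rho$. Rearranged, it reads $(T-N+1)\rho > (1-\rho)(N-1)$, i.e.\ $T\rho > N-1$, so $N<\rho T+1$. Because $N$ is an integer, $N\leq\lceil\rho T\rceil$. (If $\rho T$ is an integer, $N<\rho T+1$ still gives $N\le \rho T$.)

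The case $N=1$ is trivial, so assume $N\ge 2$ throughout. I expect this inequality to be the only step that needs real care, in particular the averaging argument and the edge case where $\rho T$ is an integer. An equivalent route is direct: $\sum_{i\le N-1}P_{(i)}<\rho$ implies $P_{(N-1)}<\rho/(N-1)$, and then the total mass is $1 < \rho + (T-N+1)\rho/(N-1)$.

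\textbf{Step 2: counting by induction.} By Definition~\ref{def:plausible-text}, a plausible continuation of length $n$ is built one token at a time, with each $t_i$ chosen from $S_\rho(c\mathbin\Vert t_{<i})$. I argue by induction on $n$. Each plausible text of length $n-1$ has at most $\max_u|S_\rho(u)|\le\lceil\rho T\rceil$ plausible one-token extensions. Every plausible text of length $n$ arises uniquely as such an extension of its length-$(n-1)$ prefix. Hence
\[
|\mathcal{P}^{\tau,\rho}_n(c)|\leq\lceil\rho T\rceil\,|\mathcal{P}^{\tau,\rho}_{n-1}(c)|,
\]
and with $|\mathcal{P}^{\tau,\rho}_0(c)|=1$ this yields $|\mathcal{P}^{\tau,\rho}_n(c)|\le\lceil\rho T\rceil^n$. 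The constraint $n\le K-|c|$ guarantees that every intermediate context fits in the window, so all the nucleus sets involved are defined.

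\textbf{Step 3: the exponential-growth approximation.} This part is not a theorem but a consequence of the stated assumption, and I will say so explicitly. The count equals the product of branching sizes summed over the prefix tree. If every branching size along plausible prefixes is approximately $N_{\rho,s}$, the same recursion holds with equality-up-to-approximation, giving $|\mathcal{P}^{\tau,\rho}_n(c)|\approx N_{\rho,s}^n$.

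A rigorous version is available if branching sizes lie in $[N_{\min},N_{\max}]$ with $N_{\min}>1$. In that case $N_{\min}^n\le|\mathcal{P}^{\tau,\rho}_n(c)|\le N_{\max}^n$, which is exponential growth in $n$.
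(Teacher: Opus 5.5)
Your proposal is correct and follows essentially the paper's route: bound every nucleus set by $\lceil \rho T\rceil$ and count the leaves of the prefix tree. Your averaging argument in Step 1 rederives the paper's one-line fact that the top $k$ tokens of a sorted distribution carry mass at least $k/T$. For the approximate part, the paper also motivates the value of $N_{\rho,s}$ through a Zipf-type power law on next-token probabilities; the statement does not require this, and your two-sided bound $N_{\min}^n\le|\mathcal{P}^{\tau,\rho}_n(c)|\le N_{\max}^n$ is a clean rigorous complement.
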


\begin{proof}
    For any probability distribution over $T$ tokens sorted in nonincreasing
    order, the mass of the top $k$ tokens is at least $k/T$. Thus the top
    $\ceil{\rho T}$ tokens have total mass at least $\rho$, so
    $|S_\rho(u)|=N_\rho(u)\leq \ceil{\rho T}$ for every prefix $u$.
    The plausible length-$n$ continuations form a rooted tree whose branching
    factor at every depth is at most $\ceil{\rho T}$. Hence the number of
    leaves is at most $\ceil{\rho T}^{\,n}$.

    To obtain a typical exponential growth rate, assume that along plausible
    prefixes the next-token probabilities approximately follow a power law
    \footnote{This is a simplification. In practice the exponent and even the
    admissible vocabulary change with the context, the tokenizer and syntactic
    constraints such as JSON formatting.}
    \begin{align}
        P(v_{(i)}\mid u)\approx \frac{i^{-s}}{\zeta(s)},
        \qquad s>1,
    \end{align}
    with large vocabulary size $T$\footnote{This is usually the case. Models
    such as Gemma3 \cite{team2025gemma} have vocabulary sizes of 256,000
    tokens.}. The nucleus size is approximately the smallest $k$ satisfying
    $\sum_{i=1}^k i^{-s}\geq \rho\zeta(s)$. Using the tail approximation
    $\sum_{i=1}^k i^{-s}\approx \zeta(s)-k^{1-s}/(s-1)$ gives
    \begin{align}
        \zeta(s)-\frac{k^{1-s}}{s-1}\geq \rho\zeta(s)
        \quad\Longleftrightarrow\quad
        k\gtrsim ((s-1)(1-\rho)\zeta(s))^{-\frac{1}{s-1}}.
    \end{align}
    We write
    \begin{align}
        N'_{\rho,s}
        :=
        ((s-1)(1-\rho)\zeta(s))^{-\frac{1}{s-1}},
        \qquad
        N_{\rho,s}
        :=
        \min\{\ceil{\rho T},\max\{1,N'_{\rho,s}\}\}.
    \end{align}
    If this effective nucleus size is roughly stable across prefixes, then the
    branching process has about $N_{\rho,s}$ choices per token and
    $|\mathcal{P}^{\tau,\rho}_{n}(c)|\approx (N_{\rho,s})^n$.
\end{proof}

\begin{proposition}
Greedy outputs are plausible for every nucleus threshold. More precisely, fix
$\rho\in(0,1)$ and let $t=f_{\tau=0}(c)$ be the deterministic greedy completion
from context $c$, truncated to fit the context window. Then
$t\in\mathcal{P}^{\tau,\rho}_K(c)$ for every $\tau>0$.
\end{proposition}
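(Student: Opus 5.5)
The plan is to argue token by token, reducing plausibility of the whole completion to a one-step fact about nucleus sets. Write $t=(t_1,\ldots,t_m)$ for the greedy completion. By definition of deterministic decoding, each $t_i$ is an argmax of the next-token distribution at context $c\mathbin\Vert t_{<i}$, with ties broken by the model's fixed rule. Truncation to the context window guarantees $|c|+m\leq K$, which is the length condition in Definition~\ref{def:plausible-text}. It therefore suffices to show $t_i\in S_\rho(c\mathbin\Vert t_{<i})$ for every $i$.

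The key step is the one-step claim: for any context $u$, any $\tau>0$ and any $\rho\in(0,1)$, the greedy token $v^\ast(u)$ equals $v_{(1)}$ in the ordering used to build $S_\rho(u)$. Since $N_\rho(u)\geq 1$ always holds ($N_\rho$ is a minimum over $k\geq1$, and $k=T$ is feasible because the total mass is $1\geq\rho$), we get $v_{(1)}\in S_\rho(u)$, and the claim follows.

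To justify $v^\ast(u)=v_{(1)}$, note that the tempered distribution is $P_f^\tau(v\mid u)\propto\exp(z_v/\tau)$ for logits $z$. For $\tau>0$, the map $z\mapsto \exp(z/\tau)$ is strictly increasing, so the ordering of tokens by $P_f^\tau$ coincides with the ordering by logits, including which tokens are tied. Greedy decoding ($\tau=0$) selects the maximum-logit token, with ties broken by the model's deterministic rule. The nucleus ordering also breaks ties by that same rule, so both procedures pick the same first element.

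The main obstacle is exactly this tie-breaking and temperature-invariance point. If the model's $P_f^\tau$ were not a monotone transform of a fixed logit vector (for instance, if some other sampler postprocessing were applied), the argmax might differ. I would state the softmax-with-temperature assumption explicitly and rely on the paper's convention that ties are broken "by the model's deterministic decoding rule." With that in place, induction over $i=1,\ldots,m$ (the contexts $c\mathbin\Vert t_{<i}$ are exactly those visited by greedy decoding) yields $t\in\mathcal{P}^{\tau,\rho}_K(c)$. Since $\rho$ and $\tau$ were arbitrary, the result holds for every nucleus threshold and every $\tau>0$.
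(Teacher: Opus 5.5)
Your proof is correct and follows essentially the same route as the paper: at each prefix $c\mathbin\Vert t_{<i}$ the greedy token is $v_{(1)}$, because rescaling with $\tau>0$ preserves the tie-broken ordering, and $v_{(1)}$ always lies in $S_\rho$, so applying this at every position gives plausibility. You also state explicitly the softmax-monotonicity assumption and the length condition $|c|+m\leq K$ coming from truncation, both of which the paper leaves implicit.
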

\begin{proof}
At any generation step, greedy decoding chooses the highest-probability token,
which is $v_{(1)}$ under the tie-breaking order above; temperature rescaling
with $\tau>0$ preserves this ordering. Since $\rho>0$, the nucleus set always
contains at least its first token: $v_{(1)}\in S_\rho(c)$. Applying this
argument at each prefix
$c\mathbin\Vert t_{<i}$ shows that every greedy token lies in the corresponding
nucleus set, hence the whole greedy completion is plausible.
\end{proof}

\begin{proposition}[Plausibility Check]
    Fix $f,K,\tau,\rho$ and context $c$. A text
    $t=(t_1,\ldots,t_m)$ is plausible from $c$ if and only if each token lies
    in the nucleus set determined by its preceding context:
    \begin{align}
        t \in \mathcal{P}^{\tau,\rho}_K(c)
        \iff
        |c|+m\leq K
        \text{ and }
        \forall i\leq m:\ t_i\in S_\rho(c\mathbin\Vert t_{<i}).
    \end{align}
\end{proposition}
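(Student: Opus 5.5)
This statement is essentially a restatement of Definition~\ref{def:plausible-text}, so the plan is to unfold that definition and check both directions of the equivalence. No earlier result beyond the definition itself and the construction of the nucleus sets $S_\rho(\cdot)$ is needed. The one point to settle first is that $S_\rho(u)$ is well defined and computable for every prefix $u$. For each context $u$ the model gives a finite next-token distribution $P_f^\tau(\cdot\mid u)$ over $T$ tokens. The ordering $v_{(1)},\ldots,v_{(T)}$ is fixed by the deterministic tie-breaking rule. Since $\rho<1$ and the total mass is $1$, the minimum defining $N_\rho(u)$ exists and is at most $T$, so $S_\rho(u)$ is a uniquely determined finite set.

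For the forward direction, assume $t\in\mathcal{P}^{\tau,\rho}_K(c)$. Definition~\ref{def:plausible-text} then gives exactly the two conditions $|c|+m\leq K$ and $t_i\in S_\rho(c\mathbin\Vert t_{<i})$ for all $1\leq i\leq m$. For the reverse direction, assume both conditions hold. They are precisely the defining conditions, so $t$ is plausible from $c$ and lies in $\mathcal{P}^{\tau,\rho}_K(c)$. The only care required is notational. The prefix $t_{<1}$ is the empty string, so the first condition reads $t_1\in S_\rho(c)$, and the empty text ($m=0$) is plausible vacuously whenever $|c|\leq K$.

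To justify calling this a \emph{check}, I would also remark that the criterion is effective. Given $t$, one runs $f$ on the prefixes $c\mathbin\Vert t_{<i}$ for $i=1,\ldots,m$, sorts each next-token distribution, computes $N_\rho$ by accumulating probability mass until it reaches $\rho$, and tests membership of $t_i$. This takes $m$ forward passes, each of which is polynomial in $K$ by Proposition~\ref{prop:f-computable}, together with a sort over $T$ tokens per step. Plausibility is therefore decidable in time polynomial in $K$ and $T$.

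There is no real obstacle here. The one subtlety worth stating explicitly is that membership must be evaluated against the ordering fixed by the tie-breaking rule, so that $S_\rho(u)$ is well defined when several tokens have equal probability at the cutoff.
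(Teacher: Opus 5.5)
Your proposal is correct and takes the paper's approach: the statement is Definition~\ref{def:plausible-text} unpacked prefix by prefix over the autoregressive steps, so both directions are immediate. Your additional remarks are accurate but go beyond the paper's one-line proof: that $S_\rho(u)$ is well defined under the fixed tie-breaking order, and that the check is decidable with $m$ forward passes plus a sort over $T$ tokens per step.
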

\begin{proof}
    This is exactly Definition \ref{def:plausible-text} unpacked over the
    autoregressive generation steps.
\end{proof}

\subsection{The Weak Coding Theorem}
\label{sec:coding-theorem}
The failure of an invariance theorem means that prompting complexity cannot be
estimated by applying an arbitrary off-the-shelf compressor to the target text:
the relevant compressor is the particular model \(f\), together with its
training distribution, tokenizer, prompt template and decoding rule. What does
remain useful from the classical coding theorem is the link between description
length and probability mass. We therefore define a model-dependent analogue of
algorithmic probability by drawing a self-delimiting plausible prompt with a
length prior and asking whether deterministic decoding produces the target text.

\begin{definition}[Prompting Probability]
    Let $t \in \mathcal{P}_K$ and let $f:\mathcal{P}_K\rightarrow\mathcal{P}_K$
    be a LM. The prompting probability of the text $t$ is:
    \begin{align}
        m_f(t) = \sum_{\substack{p\in\mathcal{P}_K\\f_{\tau = 0}(p) = t}} T^{-|p|} 
    \end{align}
\end{definition}

The prompting probability counts the plausible prompts that generate a target
text, weighted by their length, making shorter prompts contribute more mass.
Unlike the universal algorithmic probability semimeasure
\cite{SOLOMONOFF19641}, $m_f(t)$ is computable in the finite LM setting,
but it depends on $f$ and is therefore not universal.

\begin{lemma}
    The prompting probability $m_f$ is computable and $\sum\limits_{t\in \mathcal{P}_K} m_f(t) \leq 1$.
\end{lemma}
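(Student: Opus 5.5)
The plan is to prove the two claims separately: computability follows from finiteness, and the normalization $\sum_t m_f(t)\le 1$ follows from a Kraft-inequality argument.

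\emph{Computability.} By Proposition~\ref{prop:f-computable}, $f_{\tau=0}$ is computable and always halts. By the Plausibility Check proposition, membership in $\mathcal{P}_K$ is decidable: for each prefix we compute the next-token distribution, sort it, and test whether the next token lies in $S_\rho$. Since $\mathcal{P}_K\subseteq\mathcal{V}_K$ is finite, we can proceed as follows for a given $t$:
\begin{enumerate}
\item enumerate all $p\in\mathcal{V}_K$;
\item discard the implausible ones;
\item run $f_{\tau=0}(p)$ on each remaining prompt;
\item add $T^{-|p|}$ whenever the output equals $t$.
\end{enumerate}
This is a finite sum of rationals, so $m_f(t)$ is computed exactly.

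\emph{Normalization.} Because decoding is deterministic, every plausible prompt contributes to exactly one target. Regrouping the double sum therefore gives
\[
\sum_{t\in\mathcal{P}_K} m_f(t)\;\le\;\sum_{p\in\mathcal{P}_K} T^{-|p|}.
\]
The inequality is there only because some outputs might fall outside the summation range. It then suffices to bound the right-hand side by $1$, which is Kraft's inequality over the alphabet $\mathcal{V}$ of size $T$. The needed ingredient is prefix-freeness. As stated in Section~\ref{sec:pit-llm} and in the text before the definition of prompting probability, prompts are self-delimiting: each user input is closed by a template end-of-input delimiter token that cannot occur inside the prompt body. Counting this delimiter in $|p|$, no delimited prompt is a proper prefix of another. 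To see this, suppose $p$ is a proper prefix of $p'$. Then the terminal delimiter of $p$ would occur strictly inside $p'$, which is impossible. Kraft's inequality for a prefix-free code over a $T$-letter alphabet then yields $\sum_p T^{-|p|}\le 1$.

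\emph{Main obstacle.} The main obstacle is making the self-delimiting convention precise, because without it the claim is false. The empty prompt alone contributes $T^0=1$. More generally, the counting bound $|\mathcal{P}_n|\le\lceil\rho T\rceil^n$ only gives $\sum_n(\lceil\rho T\rceil/T)^n\approx 1/(1-\rho)$, which exceeds $1$. I would therefore state explicitly that the length $|p|$ in $m_f$ is measured including the closing delimiter, or equivalently that the weight is $T^{-(|p|+1)}$ on raw prompts. Either form is exactly the interface convention the paper adopts, and with it the Kraft step goes through directly.
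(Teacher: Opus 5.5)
Your proof is correct and follows the same route as the paper: you group prompts by their unique greedy output so the double sum collapses to (at most) $\sum_{p\in\mathcal{P}_K}T^{-|p|}$, bound that by Kraft's inequality via self-delimitation, and get computability from finite enumeration of $\mathcal{P}_K$ and simulation of $f_{\tau=0}$. Counting the closing delimiter in $|p|$ is a genuine sharpening: $\mathcal{P}_K$ as given by Definition~\ref{def:plausible-text} is prefix-closed (it even contains the empty text, of weight $T^{0}=1$), so the paper's bare claim that it is prefix-free holds only under exactly that convention.
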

\begin{proof}
  If no prompt $p$ produces a given text $t$, then the inner sum is over an empty set and so $m_f(t)=0$. Consider the collection of preimage sets $A_t =\{p \in \mathcal{P}_K : f_{\tau=0}(p)=t\},~t\in\mathcal{P}_K$.
  
  Since each prompt $p$ has exactly one output $f(p)$, the sets $\{A_t\}_{t\in\mathcal{P}_K}$ form a partition of $\mathcal{P}_K$: $\mathcal{P}_K \;=\bigcup_{t\in\mathcal{P}_K} A_t$.
  
  Because \(T^{-|p|}\ge0\), we may sum over this partition in either order:
    \begin{align}
        \sum\limits_{t\in \mathcal{P}_K} m_f(t) = \sum\limits_{t\in \mathcal{P}_K} \sum_{\substack{p\in\mathcal{P}_K\\f_{\tau = 0}(p) = t}} T^{-|p|} = \sum\limits_{p\in \mathcal{P}_K} T^{-|p|} 
    \end{align}
    By Definition \ref{def:pit-llm} of the LM, we assume that the tokenizer template unambiguously separates system prompts, input prompts, reasoning traces and output texts. Thus texts in $\mathcal{P}_K$ are self-delimiting and form a prefix-free set over an alphabet of size $T$. By Kraft's inequality \cite{kraft1949device}, $\sum\limits_{p\in \mathcal{P}_K} T^{-|p|} \leq 1$, yielding the desired bound.
    Finally, to see that each $m_f(t)$ is computable we observe that $\mathcal{P}_K$ is finite and that the function $f_{\tau=0}(p)$ is computable (Proposition \ref{prop:f-computable}). We can therefore enumerate all plausible prompts, simulate them with $f$, and add the weights of those that produce $t$.
\end{proof}

Consider the marginal model probability of a text $t = (t_1, t_2, \dots, t_k)$:
\begin{align}
    P_f(t) = \sum\limits_{p\in\mathcal{P}_K} T^{-|p|} P_f(t\mid p),
    \qquad
    P_f(t\mid p) = \prod\limits_{i=1}^{|t|}P_f(t_i\mid t_{<i}, p),
\end{align}
where $P_f(t\mid p)$ is the usual text probability under the model $f$ with
temperature $\tau = 1$. This gives a weak analogue of the Coding Theorem
\cite{li2008introduction}.

\begin{theorem}[Weak Coding Theorem]
    For any $t \in \mathcal{P}_K$ with $\Psi_f(t)<\infty$, let
    $S_t=\{p\in\mathcal{P}_K:f_{\tau=0}(p)=t\}$ and
    $Z_f(t)=\sum_{p\in S_t}T^{-(|p|-\Psi_f(t))}$. Then:
    \begin{equation}
        -\log_T P_f(t) - |t| \leq -\log_T m_f(t)
        =
        \Psi_f(t)-\log_T Z_f(t)
        \leq \Psi_f(t).
    \end{equation}
\end{theorem}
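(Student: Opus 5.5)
The plan is to prove the three relations in the chain separately: an exact identity in the middle, a trivial upper bound on the right, and a lower bound on the left obtained by comparing $m_f(t)$ with the marginal $P_f(t)$ term by term.

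\textbf{Middle equality.} Since $\Psi_f(t)<\infty$, the preimage $S_t$ is non-empty. Factoring out $T^{-\Psi_f(t)}$ gives
\begin{align}
m_f(t)=\sum_{p\in S_t}T^{-|p|}=T^{-\Psi_f(t)}\sum_{p\in S_t}T^{-(|p|-\Psi_f(t))}=T^{-\Psi_f(t)}Z_f(t).
\end{align}
Taking $-\log_T$ of both sides yields $-\log_T m_f(t)=\Psi_f(t)-\log_T Z_f(t)$.

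\textbf{Right inequality.} Every $p\in S_t$ satisfies $|p|\geq\Psi_f(t)$, so every summand of $Z_f(t)$ is positive. A shortest prompt $p^*\rightarrowtail t$ lies in $S_t$ and contributes exactly $T^0=1$. Hence $Z_f(t)\geq 1$, so $-\log_T Z_f(t)\leq 0$, and the right inequality follows.

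\textbf{Left inequality.} This is equivalent to $m_f(t)\leq T^{|t|}P_f(t)$. First, all terms in the definition of $P_f(t)$ are nonnegative, so restricting the sum to $S_t$ can only decrease it:
\begin{align}
P_f(t)\geq\sum_{p\in S_t}T^{-|p|}P_f(t\mid p).
\end{align}
Second, I claim that $P_f(t\mid p)\geq T^{-|t|}$ for every $p\in S_t$. The argument is as follows.
\begin{itemize}
\item Since $f_{\tau=0}(p)=t$, each token $t_i$ is the greedy choice at context $(t_{<i},p)$.
\item Temperature rescaling preserves the ordering of next-token probabilities, as noted in the greedy-plausibility proposition of the Appendix. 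So $t_i$ is also a maximizer of the $\tau=1$ distribution $P_f(\cdot\mid t_{<i},p)$.
\item A maximizer of any distribution over $T$ tokens has probability at least $1/T$, including under ties.
\end{itemize}
Multiplying over the $|t|$ factors gives $P_f(t\mid p)\geq T^{-|t|}$. Substituting into the restricted sum,
\begin{align}
P_f(t)\geq T^{-|t|}\sum_{p\in S_t}T^{-|p|}=T^{-|t|}m_f(t).
\end{align}
Taking $-\log_T$ gives $-\log_T P_f(t)-|t|\leq -\log_T m_f(t)$.

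\textbf{Main obstacle.} The delicate step is the per-token bound $P_f(t_i\mid t_{<i},p)\geq 1/T$. It requires that greedy decoding at $\tau=0$ picks an argmax of the very distribution used in $P_f(t\mid p)$, namely $\tau=1$ with the same template and no nucleus truncation. It also requires that $t$, as produced by $f$, matches token-for-token the sequence scored in $P_f(t\mid p)$. I will state both as conventions of the fixed interface. If the end-of-sequence marker is counted as a token of $t$, the same argument applies to it, because it too is a greedy choice.
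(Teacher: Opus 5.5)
Your proof is correct and follows the paper's argument step for step. You restrict the marginal $P_f(t)$ to the preimage $S_t$, lower-bound each greedy token's $\tau=1$ probability by $1/T$, factor out $T^{-\Psi_f(t)}$ to obtain the identity, and use the unit contribution of a shortest prompt to get $Z_f(t)\geq 1$; you differ only in ordering and in stating explicitly the interface conventions (argmax preserved across temperatures, token-level alignment of $t$, the end-of-sequence token) that the paper leaves implicit.
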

\begin{proof}
    Let $S = \{p\in\mathcal{P}_K: f_{\tau = 0}(p) = t\}$ be the set of all prompts whose temperature 0 decoding is exactly $t$. Then, $m_f(t) = \sum\limits_{p\in S} T^{-|p|}$ and
    \begin{align}
        P_f(t) = \sum\limits_{p\in\mathcal{P}_K} T^{-|p|} P_f(t|p) = \sum\limits_{p\in S} T^{-|p|} P_f(t|p) + \sum\limits_{p\notin S} T^{-|p|} P_f(t|p)
    \end{align}
    It is clear that, since the second sum is non-negative, we have that $P_f(t) \geq \sum\limits_{p\in S} T^{-|p|} P_f(t|p)$.
    Since $p \in S$, at each position $i$ greedy decoding chose $t_i$ as a highest-probability next token. Under the temperature $\tau=1$ distribution, any highest-probability token has probability at least $T^{-1}$, and therefore:
    \begin{align}
        P_f(t_i|t_{<i}, p) \geq T^{-1} \implies P_f(t|p) = \prod\limits_{i=1}^{|t|}P_f(t_i|t_{<i}, p) \geq T^{-|t|} 
    \end{align}

    Combining the two we get:
    \begin{align}
        P_f(t) \geq \sum\limits_{p\in S} T^{-|p|} P_f(t|p) \geq \sum\limits_{p\in S} T^{-|p|}T^{-|t|} = T^{-|t|}\sum\limits_{p\in S} T^{-|p|} = T^{-|t|} m_f(t) 
    \end{align}
    
    Rearranging and applying $-\log_T$ on both sides, we obtain:
    \begin{align}
        -\log_T P_f(t) - |t| \leq -\log_T m_f(t)
    \end{align}
    Which proves the first inequality. To relate $m_f(t)$ to $\Psi_f(t)$, we have by definition:
    \begin{align}
        m_f(t) = \sum_{\substack{p\in\mathcal{P}_K\\f_{\tau = 0}(p) = t}} T^{-|p|}
        =
        T^{-\Psi_f(t)}
        \sum_{p\in S}T^{-(|p|-\Psi_f(t))}
        =
        T^{-\Psi_f(t)}Z_f(t).
    \end{align}
    Taking $-\log_T$ on both sides gives:
    \begin{align}
        -\log_T m_f(t) = \Psi_f(t)-\log_T Z_f(t).
    \end{align}
    Since $S$ contains at least one shortest prompt $p^*\rightarrowtail t$, $Z_f(t)\geq 1$, and therefore $-\log_T m_f(t)\leq \Psi_f(t)$.
\end{proof}

The weak coding theorem relates the usual text probability under a pretrained
LM with the prompt mass assigned to prompts that generate the text. The term
$Z_f(t)$ measures how many plausible prompts, weighted by length relative to the
shortest one, produce the same output. When this multiplicity term is bounded,
$-\log_T m_f(t)$ and $\Psi_f(t)$ differ by at most a constant; when many prompts
lead to the same text, the prompting probability can be much larger than the
mass of a single shortest prompt. In practice, estimates of $P_f(t)$ can still
provide lower bounds on $-\log_T m_f(t)$, and therefore on the amount of prompt
mass that must be searched for when trying to find short prompts.

\end{document}